\newcommand{\myDots}{\ifmmode\mathinner{\ldotp\kern-0.1em\ldotp\kern-0.1em\ldotp}\else.\kern-0.1em.\kern-0.3em.\fi}
\newcommand\mydots{\hbox to 0.8em{.\hss.\hss.}}
\newtheorem{theorem}{Theorem}
\newtheorem{lemma}[theorem]{Lemma}
\theoremstyle{definition}
\newtheorem{definition}[theorem]{Definition}
\theoremstyle{remark}
\newcommand{\fig}{Fig.}
\newcommand{\sect}{Sect.}
\newcommand{\iffi}{\textit{iff} }
\def\phi{\varphi}
\definecolor{tim}{RGB}{0, 0, 250}
\definecolor{nik}{RGB}{0, 120, 0}
\newcommand{\np}{\mathrm{NP}}
\newcommand{\pspace}{\mathrm{PSPACE}}
\newcommand{\sufo}[1]{\mathrm{sufo}(#1)}
\newcommand{\clo}[1]{\mathcal{C}(#1)}
\newcommand{\spbox}[1]{\Box\nolimits_{#1}}
\newcommand{\spdia}[1]{\Diamond\nolimits_{#1}}
\newcommand{\sharper}{\preceq}
\newcommand{\sltlv}{\sltl^\vocab}
\newcommand{\sltl}{\mathrm{SLTL}}
\renewcommand{\sl}{\mathrm{SL}}
\newcommand{\vocab}{\mathcal{V}}
\newcommand{\pset}{\mathcal{P}}
\newcommand{\spset}{\mathcal{S}}
\newcommand{\dst}{\ast}
\renewcommand{\next}{\mathsf{X}}
\newcommand{\even}{\mathsf{F}}
\newcommand{\alw}{\mathsf{G}}
\renewcommand{\until}{\mathcal{U}}
\newcommand{\rel}{\mathcal{R}}
\newcommand{\langsltl}{\mathcal{L}_{\vocab}}
\newcommand{\ltl}{\mathrm{LTL}}
\newcommand\citet[1]{\citeauthor{#1}~(\citeyear{#1})}
\renewcommand\O{\mathcal{O}}
\newcommand{\stence}{\varepsilon}
\newcommand{\stenc}[1]{\mathrm{enc}(#1)}
\newcommand{\stencF}[1]{\mathrm{enc}(#1)}
\newcommand{\node}{\Gamma}
\newcommand{\nodei}[1]{\Gamma(#1)}
\newcommand{\csatomf}[1]{\mathrm{a}(#1)}
\newcommand{\cs}{c}
\newcommand{\constraintset}{\langle \stence, \ell, \Delta\rangle}
\newcommand{\cslong}[1]{\langle #1 \rangle}
\newcommand{\csenc}[1]{\stence(#1)}
\newcommand{\cslab}[1]{\ell(#1)}
\newcommand{\csfor}[1]{\Delta(#1)}
\renewcommand{\false}{\textbf{f}}
\newcommand{\setindex}{(\stence)}
\newcommand{\seti}[2]{(#2)}
\newcommand{\setid}[1]{\langle#1\rangle}
\newcommand{\setib}[1]{[#1]}
\renewcommand{\state}{\sigma}
\newcommand{\stseq}{\bar{\sigma}}
\newcommand{\premd}{\mathcal{D}}
\renewcommand{\branch}[1]{\overline{#1}}
\newcommand{\nest}[1]{\{#1\}}
\newcommand{\branchdef}{\branch{u} = \langle u_{0}, \ldots, u_{n} \rangle}
\newcommand{\branchvdef}{\branch{v} = \langle v_{0}, \ldots, v_{m} \rangle}
\newcommand{\mlabel}[1]{\mathcal{L}(#1)}
\newcommand{\mlabelf}{\mathcal{L}}
\newcommand{\cl}[1]{\mathrm{cl}(#1)}
\newcommand{\tstamp}{\mathcal{T}}
\newcommand{\tsec}{\bar\tstamp}
\newcommand{\run}{r}
\newcommand{\runs}{\mathbf{R}}
\newcommand{\csatom}{\cs^{\mathrm{a}}}
\newcommand{\ru}{\mathsf{Rule}}
\newcommand{\disr}{\mathsf{DIS}}
\newcommand{\untr}{\mathsf{UNT}}
\newcommand{\relr}{\mathsf{REL}}
\newcommand{\evenr}{\mathsf{EVE}}
\newcommand{\conr}{\mathsf{CON}}
\newcommand{\alwr}{\mathsf{ALW}}
\newcommand{\stepr}{\mathsf{STEP}}
\newcommand{\contrar}{\mathsf{CONTRADICTION}}
\newcommand{\loopr}{\mathsf{LOOP}}
\newcommand{\pruner}{\mathsf{PRUNE}}
\newcommand{\emptyr}{\mathsf{EMPTY}}
\newcommand{\sboxri}{\mathsf{BOX}^{1}}
\newcommand{\sboxrii}{\mathsf{BOX}^{2}}
\newcommand{\sdiar}{\mathsf{DIA}}
\newcommand{\spform}[1]{\mathsf{#1}}
\newcommand{\mathcom}[3]{ \newcommand{#1}[#2]{\mbox{$#3$}}}
\newcommand{\remathcom}[3]{ \renewcommand{#1}[#2]{\mbox{$#3$}}}
\mathcom{\imp}{0}{\ \rightarrow\ }            
\mathcom{\rimp}{0}{\ \leftarrow\ }            
\mathcom{\con}{0}{\ \wedge\ }                 
\mathcom{\dis}{0}{\ \vee\ }                   
\mathcom{\n}{0}{\neg}                     
\mathcom{\dimp}{0}{\ \leftrightarrow\ }       
\mathcom{\corresponds}{0}{\ \Lleftarrow\! \! \Rrightarrow\ }
\mathcom{\A}{0}{\forall}                  
\remathcom{\E}{0}{\exists}     
\def\Box{\mathop\square}
\def\Diamond{\mathop\lozenge}
\remathcom{\tuple}{1}{\langle #1 \rangle}
\def\sp{\hbox{$\spform{s'}$}\xspace}
\def\st{\hbox{$\spform{s}$}\xspace}
\def\standb#1{\Box\nolimits_{\spform{#1}}}
\def\standd#1{\Diamond\nolimits_{\spform{#1}}}
\def\standbe{\standb{e}}
\def\standbs{\standb{s}}
\def\standde{\standd{e}}
\def\standds{\standd{s}}
\def\model{\mathfrak{M}}
\def\f\xspacestandtopre{\hbox{$\sigma\,$}\xspace}
\def\fpretov\xspacealue{\hbox{$\delta\,$}\xspace}
\def\ModSat#1||-#2{#1\models #2}
\def\NotModSat#1||-#2{#1\nvDash #2}
\newcommand{\skipit}[1]{} 
\newcommand{\addit}[1]{} 
\renewcommand{\N}{\mathbb{N}}
\renewcommand{\land}{\mathrel{\wedge}}
\renewcommand{\lor}{\mathrel{\vee}}
\def\Stands{\mathcal{S}}
\def\E{\mathcal{E}}
\def\StandExps{\E_{\Stands}}
\def\ste{\spform{e}} 
\title{Standpoint Linear Temporal Logic}
\author{%
Nicola Gigante$^1$\and
Lucía Gómez Álvarez$^2$\and
Tim S. Lyon$^2$\\
\affiliations
$^1$Free University of Bozen-Bolzano, Italy\\
$^2$TU Dresden, Germany\\
\emails
nicola.gigante@unibz.it,
\{lucia.gomez\_alvarez,timothy\_stephen.lyon\}@tu-dresden.de
}
\begin{document}

\maketitle

\begin{abstract}
Many complex scenarios require the coordination of agents possessing unique points of view and distinct semantic commitments. In response, \emph{standpoint logic} ($\sl$) was introduced in the context of knowledge integration, allowing one to reason with diverse and potentially conflicting viewpoints by means of indexed modalities. Another multi-modal logic of import is \emph{linear temporal logic} ($\ltl$)---a formalism used to express temporal properties of systems and processes, having prominence in formal methods and fields related to artificial intelligence. In this paper, we present \emph{standpoint linear temporal logic} ($\sltl$), a new logic that combines the temporal features of $\ltl$ with the multi-perspective modelling capacity of $\sl$. We define the logic $\sltl$, its syntax, its semantics, establish its decidability and complexity, and provide a terminating tableau calculus to automate $\sltl$ reasoning. Conveniently, this offers a clear path to extend existing $\ltl$ reasoners with practical reasoning support for temporal reasoning in multi-perspective settings.
\end{abstract}

\section{Introduction}
\label{sec:introduction}


Reasoning about systems involving multiple agents is a core problem in
artificial intelligence, with countless applications studied over the last few
decades. A variety of formalisms have been introduced to model and reason about multi-agent scenarios; e.g. STIT (See To It That) logic~\cite{BelPer88,BelPerXu01}, BDI (Belief-Desire-Intention) logic~\cite{RaoGeo98,Bra87}, deontic agency logic~\cite{BerLyo21,Mur05}, and epistemic logic~\cite{DitHoeKoo07,Pla07}. Despite the usefulness of such logics in modelling the internal states of an agent, including the agent's attitudes, beliefs, and knowledge, such logics tend to be difficult to handle computationally~\cite{BalHerTro08,Lut06,RaoGeo98}.

In contrast to these approaches, \emph{standpoint logic}
($\sl$) has been recently introduced~\cite{gomez2021standpoint} as a relatively low-cost multi-agent logic with applications in the context of knowledge integration. Within the framework of standpoint logic, propositions may be `wrapped' within modalities of the form $\spbox{\st}$ and $\spdia{\st}$ with $\st$ a standpoint, allowing for declarations of the form $\spbox{\st} \phi$ (`according to $\st$, it is \emph{unequivocal} that $\phi$') and $\spdia{\st} \phi$ (`according to $\st$, it is \emph{conceivable} that $\phi$'). Such modalities capture the semantic commitments occurring at a particular standpoint and do not require the \emph{nesting} of semantic commitments within semantic commitments, which allows for $\sl$ to recover favourable computational properties; indeed, the satisfiability problem for $\sl$ is $\np$-complete~\cite{gomez2021standpoint}.


 A natural application of standpoint-based frameworks arises in the context of distributed and multi-agent systems, since they support the establishment of different, possibly conflicting specifications and their coordination. And, as temporal considerations often arise when modelling the behaviour of a system, it appears worthwhile to enhance standpoint logic with temporal operators, thus allowing for mutable states-of-affairs and changing standpoints to be explicitly described. To endow standpoint logic with the capacity to express dynamic concepts, we use the preferred formalism for modelling and expressing temporal notions, i.e., \emph{linear temporal logic} or $\ltl$~\cite{Pnueli77}. 
 
 $\ltl$ is a propositional modal logic interpreted over discrete, infinite sequences of
states. In the nearly five decades since its inception, $\ltl$ has gained
popularity 
as a specification language for
systems, and 
has found many applications in AI. For instance, $\ltl$ has been applied in \emph{automated planning}, temporally extended
goals~\cite{BacchusK98}, temporal planning~\cite{FoxL03,MayerLOP07},
timeline-based planning~\cite{DellaMonicaGMSS17}, planning over (partially
observable) Markov decision processes~\cite{BrafmanD19,BrafmanDP18},
reinforcement learning~\cite{DeGiacomoFIP20,HammondA0W21}, temporal description
logics~\cite{ArtaleKRZ14}, and temporal epistemic logics~\cite{vanBenthemGHP09}.
Computationally, temporal logics are usually quite hard; e.g. $\ltl$
satisfiability is $\pspace$-complete. However, many efficient techniques and tools
exists to deal with $\ltl$ specifications (\eg,~\citet{GeattiGM19};
\citet{LiYP0H14}; \citet{cavada2014nuxmv}), allowing for the logic to be routinely used in practice and industry.

In this paper, we \emph{fuse} the multi-perspective capabilities of $\sl$ with the
temporal features of $\ltl$, resulting in \emph{standpoint linear temporal
logic} ($\sltl$). $\sltl$ inherits the features of both $\sl$ and $\ltl$, letting us model both the
evolution of a system as well as changing standpoints over time. 
The result is a flexible formalism that
maintains the favourable computational properties of its components. In particular, we provide:
\begin{enumerate}
  \item A detailed syntax and semantics for $\sltl$;
    \item A \emph{tree-shaped} tableau calculus to facilitate and automate $\sltl$-reasoning;
  \item An analysis of the computational complexity of $\sltl$, which is found to be $\pspace$-complete (i.e., $\sltl$ is no harder than $\ltl$).
\end{enumerate}

Our tableau calculus uses \emph{quasi-model} based methods \cite{Wolter98SatisfiabilityDescriptionLogicsModalOperators}. It is inspired by the nested sequent calculi for propositional standpoint logic~\cite{LyoGom22} and built atop the tree-shaped tableau calculus for $\ltl$ provided by
Reynolds~\cite{Reynolds16a,GeattiGMR21}, which has many interesting features. In
particular, the tree shape of Reynolds tableau calculus allows it to be easily and efficiently traversed
\emph{symbolically}, \ie by means of Boolean formulas solved by off-the-shelf
SAT solvers, as done by the BLACK satisfiability checker~\cite{GeattiGM21}. By
maintaining the tree shape of the tableau and its overall structure, which is
extended to support standpoints, we pave the way for the adoption of symbolic
techniques for efficient reasoning in $\sltl$.

The paper is structured as follows: In \cref{sec:preliminaries}, we introduce the syntax and semantics
of $\sltl$, as well as exemplify how the logic may be applied. In \cref{sec:tableaux}, we define our tableau
calculus for $\sltl$, and describe the tableau-based algorithm that decides $\sltl$ formulae. Subsequently, in \cref{sec:proofs} we prove the calculus sound and complete, and in \cref{sec:complexity}, we prove that the satisfiability problem for $\sltl$ is $\pspace$-complete. 
 \cref{sec:conclusions} concludes and discusses future work.

\section{Standpoint Linear Temporal Logic}\label{sec:prelims} 
\label{sec:preliminaries}

 We formally introduce \emph{standpoint linear temporal logic} ($\sltl$), which fuses together propositional standpoint logic ($\sl$)~\cite{gomez2021standpoint} and linear temporal logic ($\ltl$)~\cite{Pnueli77}. We begin by explaining the various logical operators and modalities included in the language and demonstrate their applicability by means of an example. Subsequently, we provide a semantics for $\sltl$, defining the models used (\emph{temporal standpoint structures}) and clarifying how formulae are interpreted.

\subsection{Language}

 The logic $\sltl$ is built atop classical propositional logic, and therefore, employs propositional variables along with the connectives for negation $\neg$, disjunction $\lor$, and conjunction $\con$. In addition, our logic incorporates the temporal modalities from LTL; in particular, (1) the unary modalities $\next$, $\even$, and $\alw$, and (2) the binary modalities $\until$ and $\rel$. These modalities are read as follows: $\next \phi$ states `at the next moment $\phi$ holds', $\even \phi$ states `eventually $\phi$ holds', $\alw \phi$ states `always $\phi$ holds', $\phi \until \psi$ states `$\phi$ holds until $\psi$ holds', and $\phi \rel \psi$ is interpreted as the dual of $\phi \until \psi$. The formal semantics of these formulae can be found in \cref{def:semantic-clauses} below.
 
  We also employ the standpoint modalities $\spbox{\st}$ and $\spdia{\st}$, where $\st$ is taken from a finite set $\spset$ of standpoints, $\spbox{\st} \phi$ is read as `according to $\st$, it is unequivocal that $\phi$' and $\spdia{\st} \phi$ is read as `according to $\st$, it is conceivable that $\phi$'. Furthermore, we include formulae of the form $\st \sharper \sp$ indicating that the standpoint $\st$ is \emph{sharper} than $\sp$, i.e. $\st$ complies with $\sp$.


\begin{definition}[Formulae]\label{def:logical-languages} Let $\vocab = \langle \pset, \spset \rangle$ be a \emph{vocabulary}, where $\pset$ is a non-empty set of propositional variables and $\spset$ is a set of standpoint symbols containing the distinguished symbol $\dst$, called the \emph{universal standpoint}. We define the language $\langsltl$ to be the collection of all standpoint expressions of the form $\st \preceq \sp$ where $\st,\sp \in \spset$, and of all formulae $\phi$ generated via the following grammar in BNF:
$$
\phi ::= p \ | \ \neg p \ | \ (\phi \circ \phi) \ | \ \triangledown \phi 
$$
 where $\circ \in \{\lor, \land, \until, \rel\}$, $\triangledown \in \{
\next, \even, \alw\} \cup \{\spdia{\st}, \spbox{\st} \ | \ \st\in\spset\}$ and $p \in \pset$. We use $p$, $q$, $r$ $\ldots$ (potentially annotated) to denote propositional variables and $\phi$, $\psi$, $\chi$, $\ldots$ (potentially annotated) to denote formulae from $\langsltl$.\footnote{We have opted to employ formulae for standpoint LTL in \emph{negation normal form} as it will simplify the presentation of our tableaux later on.} We define the formulae $\neg \phi$ with $\phi$ a complex formula and $\phi \rightarrow \psi$ as usual.
\end{definition}

In order to calculate complexity bounds for our tableaux, it will be helpful to define the size of formulae.

\begin{definition}[Subformula, Size]\label{DEF:subformulas} We define the set of \emph{subformulae} of $\phi$, denoted $\sufo{\phi}$, recursively as follows:
\begin{itemize}

\item $\sufo{p} := \{p\}$ and $\sufo{\neg p} := \{\neg p\}$;

\item $\sufo{\triangledown \psi} := \{\triangledown \psi\} \cup \sufo{\psi}$;

\item $\sufo{\psi \circ \chi} := \{\psi \circ \chi\} \cup \sufo{\psi} \cup \sufo{\chi}$.

\end{itemize}
with $\circ \in \{\lor, \land, \until, \rel\}$, $\triangledown \in \{
\next, \even, \alw\} \cup \{\spdia{\st}, \spbox{\st} | \ \ste \in \StandExps\}$, and $p \in \pset$. We say that $\psi$ is a \emph{subformula} of $\phi$ \iffi $\psi \in \sufo{\phi}$. We define the \emph{size} of a formula $\phi$ in $\langsltl$, denoted $|\phi|$, accordingly: $|\phi| := |\sufo{\phi}|$. Last, we define the \emph{size} of a set of formulae $\Phi$, denoted $|\Phi|$, as: $|\Phi| := \Upsigma_{\phi\in\Phi} |\sufo{\phi}|$.
\end{definition}
 
\subsubsection{Example.} Medical devices require testing and certification prior to marketing and use by medical professionals. Albeit, regulations differ from country to country, giving rise to potentially conflicting standards and safety qualifications. For instance, Germany ($\mathsf{DE}$) and Italy ($\mathsf{IT}$) may agree that a medical device $\mathtt{X}$ has been deemed \emph{safe according to testing} ($\mathtt{X\_TestSafe}$), so long as it has been found that it never \emph{malfunctions} ($\mathtt{Malf}$). This judgement can be expressed by the $\sltl$ formula shown below, where $\dst$ is the universal standpoint which encodes that the formula is unequivocal from all perspectives (i.e. from the perspective of both $\mathsf{DE}$ and $\mathsf{IT}$).
\begin{equation}
\spbox{\dst} (\mathtt{X\_TestSafe} \rightarrow \alw \neg \mathtt{Malf})
\end{equation}
 Yet, despite the agreement on what makes $\mathtt{X}$ safe according to testing, each country may differ in how it considers $\mathtt{X}$ to be \emph{safe overall} ($\mathtt{X\_Safe}$).  It could be that Italy deems a device safe so long as it has been deemed safe according to testing or has been found \emph{safe by comparison} ($\mathtt{X\_SafeComp}$). We could formalise this perspective in our language as:
\begin{equation}
\spbox{\mathsf{IT}} (\mathtt{X\_Safe} \rightarrow \mathtt{X\_SafeComp} \lor \mathtt{X\_TestSafe})
\end{equation}
 The notion of `safe by comparison' relies on $\mathtt{X}$'s relation to other devices within its domain of application. If a \emph{comparable} device $\mathtt{Y}$ exists in terms of architecture, materials used, applicability, etc. ($\mathtt{X\_Comp\_Y}$), and this device has been deemed safe by testing ($\mathtt{Y\_TestSafe}$), then it can be argued that $\mathtt{X}$ is safe by comparison. We can express this as the formula shown below, where $\mathrm{Devices}$ consists of the set of devices within $\mathtt{X}$'s domain of application (of which there are only finitely many).
\begin{equation}
\spbox{\mathsf{IT}}(\mathtt{X\_SafeComp} \rightarrow \!\!\!\!\!\!\!\! \bigvee_{\mathtt{Y} \in \mathrm{Devices}} \!\!\!\!\!\!\!\! \mathtt{X\_Comp\_Y} \land \mathtt{Y\_TestSafe})
\end{equation}
 In contrast to Italy's perspective, Germany may qualify $\mathtt{X}$ as safe, so long as it has been tested safe, that is:
\begin{equation}
\spbox{\mathsf{DE}} (\mathtt{X\_Safe} \rightarrow \mathtt{X\_TestSafe})
\end{equation}
 We can see that Germany's standpoint on what counts as safe is more stringent than, or subsumed by, Italy's standpoint, a fact which may be encoded as $\mathsf{DE} \sharper \mathsf{IT}$. As each nation takes a different stance on what it deems safe, it is clear that propositions may be inconsistent with one perspective as opposed to the other. For example, the proposition `it is conceivable that $\mathtt{X}$ is safe overall though not safe according to testing', i.e. the formula
$\spdia{\mathsf{\dst}} (\mathtt{X\_Safe} \land \neg \mathtt{X\_TestSafe})$ is consistent with formulae (1)-(4) above. This is due to Italy's standpoint, which allows for $\mathtt{X}$ to be safe by comparison. Still, the formula $\spbox{\mathsf{\dst}} (\mathtt{X\_Safe} \land \neg \mathtt{X\_TestSafe})$ is inconsistent with formulae (1)-(4) since Germany regards $\mathtt{X}$ as not safe overall if it has not been determined safe by testing.

As demonstrated above, the use of standpoint modalities permits the modelling of distinct, conflicting perspectives. Without the use of modalities to `wrap' reasoning, we would achieve undesirable inconsistencies that are unrepresentative of the actual (consistent) scenario we aim to model. Thus, the use of standpoint modalities with LTL improves our capacity to represent knowledge more faithfully, and as shown in \sect~\ref{sec:complexity}, this does not come at a cost in terms of complexity. Indeed, an attractive feature of standpoint logic is its low complexity in relation to other knowledge integration approaches, being $\np$-complete~\cite{gomez2021standpoint,LyoGom22}.

 \subsection{Semantics}
  

 The models used for $\sltl$ are variants of relational models, employing \emph{state sequences} (as in $\ltl$) as opposed to worlds and \emph{standpoint interpretations} rather than accessibility relations; cf.~\cite{gomez2021standpoint}.

\begin{definition}[Temporal Standpoint Structure] Let $\vocab = \langle \pset, \spset \rangle$ be a vocabulary. We define a \emph{state sequence} (relative to $\vocab$) to be an infinite sequence of states $\bar\sigma=\tuple{\sigma_0,\sigma_1,\ldots}$, where each state $\sigma_i \subseteq \pset $. A \emph{temporal standpoint structure} is an ordered-pair $\model = \tuple{\Pi,\lambda}$ such that $\Pi$ is a non-empty set of state sequences $\tuple{\sigma_0,\sigma_1,\ldots}$ and $\lambda:\spset \to 2^\Pi$ is a \emph{standpoint interpretation} mapping each standpoint symbol to a non-empty subset of $\Pi$ with $\lambda(\ast) = \Pi$. 



\end{definition}

\begin{definition}[Satisfaction]\label{def:semantic-clauses} Let $\phi \in \langsltl$ and $\model = \tuple{\Pi,\lambda}$ be a temporal standpoint structure such that $\bar\sigma \in \Pi$. We recursively define the \emph{satisfaction} of $\phi$ on $\bar\sigma$ at the time point $n\ge0$, written $\bar\sigma,n\models\phi$, as follows:
\begin{itemize}
\item $\model, \bar\sigma, n \models \sp\preceq\st$ \iffi  $\lambda(\sp)\subseteq\lambda(\st)$;

\item $\model, \bar\sigma, n \models p$ \iffi  $p\in\sigma_n$;

\item $\model, \bar\sigma, n \models \neg p$ \iffi  $p \not\in\sigma_n$;

\item $\model, \bar\sigma, n \models \phi \lor \psi$ \iffi $\model, \bar\sigma, n \models \phi$ or $\model, \bar\sigma, n \models \psi$;

\item $\model, \bar\sigma, n \models \phi \land \psi$ \iffi $\model, \bar\sigma, n \models \phi$ and $\model, \bar\sigma, n \models \psi$;

\item $\model, \bar\sigma, n \models \next \phi$ \iffi $\model, \bar\sigma, n+1 \models \phi$;

\item $\model, \bar\sigma, n \models \even \phi$ \iffi there is a $j > n$ such that $\model, \bar\sigma, j \models \psi$;

\item $\model, \bar\sigma, n \models \phi \rel \psi$ \iffi either $\model, \bar\sigma, i \models \psi$ for all $i \geq n$, or there exists a $k \geq n$ such that $\model, \bar\sigma, k \models \phi$ and $\model, \bar\sigma, j \models \psi$ for all $n \leq j \leq k$;

\item $\model, \bar\sigma, n \models \phi \until \psi$ \iffi there exists a $j \geq n$ such that $\model, \bar\sigma, j \models \psi$ and for every $n \leq i < j$, $\model, \bar\sigma, i \models \phi$;
    
\item $\model, \bar\sigma, n\models\spdia{\st} \psi$ \iffi for some $\bar\sigma'\in\lambda(\st)$, $\model, \bar\sigma', n\models\psi$;

\item $\model, \bar\sigma, n\models\spbox{\st} \psi$ \iffi for each $\bar\sigma'\in\lambda(\st)$, $\model, \bar\sigma', n\models\psi$;

\item $\model,\bar\sigma\models\phi$ \iffi $\model,\bar\sigma,0\models\phi$;

\item $\model\models\phi$ \iffi for all $\bar\sigma\in \Pi$ then $\model,\bar\sigma\models\phi$ .
\end{itemize}

We say that that a set of formulas $\Phi$ is \emph{valid}, written $\models\Phi$, \iffi $\model\models\phi$ for each $\phi\in\Phi$ and each temporal standpoint structure $\model$. For a vocabulary $\vocab$, we define the \emph{standpoint logic} $\sltlv := \{\phi \in \langsltl \ | \ \models \phi\}$.
\end{definition}

\section{Automating Reasoning via Tableaux}\label{sec:tableaux}


\begin{table}[t]
\begin{center}
\bgroup
\def\arraystretch{1.5}
\begin{tabular}{r @{\hskip 1em} c @{\hskip 1em} l @{\hskip 1em} l}
\hline
$\ru$ &\multicolumn{1}{c}{$\phi\subseteq\Delta$ for} & \multicolumn{1}{c}{$\Gamma_{1}(\phi)$} & \multicolumn{1}{c}{$\Gamma_{2}(\phi)$}\\
  &  $\setindex \Delta \in \Gamma$ &  & \\

\hline
$\disr$ & ${\alpha \lor \beta}$ & $\nest{\alpha}$ & $ \nest{\beta}$\\
$\untr$ & $ {\alpha\,\until \beta}$ & $\nest{\beta}$ & $\nest{\alpha, \next(\alpha\, \until \beta)}$\\
$\relr$ & $ {\alpha \rel \beta}$ & $\nest{\alpha,\beta}$ & $\nest{\beta, \next(\alpha \rel \beta)}$\\
$\evenr$ & ${\even \alpha}$ & $\nest{\alpha}$ & $\nest{\next \even \alpha}$\\
$\conr$ & $ {\alpha \land \beta}$ & $\nest{\alpha,\beta}$ & \ \\
$\alwr$ & $ {\alw \alpha}$ & $\nest{\alpha,\next \alw \alpha}$ & \ \\
\hline\hline
\end{tabular}

\egroup
\end{center}
\caption{Expansion rules 1: When a singleton $\phi$ of the types shown in the table is found in one indexed set $\setindex \Delta$ of the set $\nodei{u}$ of a node $u$, one or two children nodes $u'$ and $u''$ are created, each with a copy $\nodei{u'}$ and $\nodei{u''}$ of  $\nodei{u}$ in which $\phi$ has been replaced by $\Gamma_1(\phi)$ and $\Gamma_2(\phi)$ (respectively) in the set indexed $\setindex\Delta$.}
\label{fig:expansion-rules}
\end{table}

This section presents the tableau calculus for
$\sltl$, which we will later show to be terminating, sound, and complete. 
The tableau introduces support for standpoint reasoning into the structure of a tableau calculus for $\ltl$. This is done by encapsulating a full set of standpoint interpretations (at a given time) within each node and exploiting \emph{quasi-model}  based techniques \cite{Wolter98SatisfiabilityDescriptionLogicsModalOperators}. In particular, we
extend the tree-shaped tableau by Reynolds~\cite{Reynolds16a}, which proved to be quite amenable for extensions of LTL to
different logics~\cite{GeattiGMR21} and for efficient
implementations~\cite{GeattiGM19}. Classic tableaux for
$\ltl$~\cite{LichtensteinP00} usually build a graph structure and then, in a
second pass, look for models inside it. In contrast, Reynolds' tableau builds a
tree structure where each branch can be  explored independently from the others
in a single pass (which also aids parallelization~\cite{McCabeDanstedR17}).
Furthermore, its modular rule-based structure and it's model-theoretic
completeness proofs~\cite{GeattiGMR21} are easy to extend to different logics.

Here, we exploit these features to obtain a tree-shaped tableau calculus for
$\sltl$. We remark that the overall structure of Reynolds' tableau remains unchanged, but the
nodes' labels are enriched, and additional rules are devised, to deal with
standpoint modalities. Let us begin by presenting a sequence of useful definitions.

\begin{definition}[Closure] Let $\Phi \subseteq \langsltl$. The closure $\clo{\Phi}$ is the set defined as follows
\begin{enumerate}

\item For every $\phi\in\Phi$ and $\psi\in\sufo{\phi}$, $\psi\in\clo{\Phi}$;

\item For every $\neg p\in\pset$, $p \in \clo{\Phi}$ if and only if $\neg p \in \clo{\Phi}$;

\item If $\phi_1\!\circ\!\phi_2\! \in\! \clo{\Phi}$, for $\circ\!\in\!\nest{\mathcal{U,R}}$, then $\next(\phi_1\! \circ\! \phi_2)\!\in\! \clo{\Phi}$;

\item If $\triangledown\phi\in \clo{\Phi}$, for $\triangledown\in\nest{\alw,\even}$, then $\next\phi\in \clo{\Phi}$.

\end{enumerate}
\end{definition}

Intuitiely, the closure of a formula is the set of all the formulas that is
sufficient to take care of when reasoning about the formula. A tableau for
$\Phi$ is a tree made of sets of constraint sets, defined below. Each constraint set is associated with the set of standpoints to which it belongs and with the set of formulas (from the closure) that hold for that point.

\begin{definition}[Standpoint Encoding]
 The encoding of a standpoint $\st$ in $\Phi$ is defined as
$\stenc{\st}=\{\st\} \cup \{\sp \mid \st\preceq\sp \in \Phi\}$. We use $\stence$, $\stence'$, \dots to refer to standpoint encodings. We let $E=\{\stenc{\st}\mid \st \in \Phi\}$ and also $\stence\preceq\stence'$ iff $\stence'\subseteq\stence$.
\end{definition}

\begin{definition}[Constraint Set]
 Let $\cs, \cs', \cs'', \dots$ be \emph{constraint sets} of the form $\constraintset$ where:
 \begin{itemize}
     \item $\stence$ is a standpoint encoding;
     \item $\ell$ is a label (that may be empty, in which case $\ell=\false$);
     \item $\Delta = \{\phi_{1}, \ldots, \phi_{n}\}$ is a subset of $\clo{\Phi}$.
 \end{itemize}
 We use the shortcut functions $\csenc{\cs_i}=\stence_i$, $\cslab{\cs_i}=\ell_i$ and $\csfor{\cs_i}=\Delta_i$, for $\cs_i=\langle \stence_i, \ell_i, \Delta_i\rangle$. Finally, we use $\setid{\cs}$ to denote to a constraint set such that $\cslab{\cs}\neq\false$ (referred to as a \emph{labelled or diamond set}) and $\setib{\cs}$ to denote a constraint set such that $\cslab{\cs}=\false$ (referred to as an \emph{unlabelled or box set}).
\end{definition}

\begin{definition}[Node Constraint Set for $\Phi$]
Let $u, u', u'', \dots$ be the nodes of a tableau checking the satisfiability of a set of formulas $\Phi$. The constraint set of a node $u$, denoted $\nodei{u}$, is a set $\nodei{u} := \{ \cs_1,\ \mydots,\ \cs_n \}$ of constraint sets such that
\begin{description}
    \item[(a)] For each $\st\in\spset$, there is exactly one constraint set $\setib{\cs}\in\nodei{u}$ such that $\csenc{\cs} = \stencF{\st}$ (i.e., there is one unlabelled set per standpoint);
    \item[(b)] For each $\setid{\cs}\in\nodei{u}$, there is no $\cs'\in\nodei{u}$ with $\cslab{\cs}=\cslab{\cs'}$ (i.e., labels are unique).
\end{description}
 We say that $\nodei{u}=\nodei{u'}$ if for each $\cs\in\nodei{u}$ there is some $\cs'\in\nodei{u'}$ such that $\csenc{\cs}=\csenc{\cs'}$ and $\csfor{\cs}=\csfor{\cs'}$ and \viceversa.
\end{definition}

Intuitively, $\nodei{u}$ represents the structure of standpoints at a certain
time point. Later, we will show how to construct a model $\model$ from a tableau
branch. In doing that, each labelled set $\cs$ will correspond to a point in a sequence belonging to the standpoint determined by
$\csenc{\cs}$. Unlabelled sets encode `standpoint
types' and are necessary to ensure that full precisification sequences can be reconstructed during model construction.

Let us define some useful functions. We will use $\iota(\Gamma,\stence,\ell) = \csfor{\cs}$, if there is some $\cs \in \Gamma$ such that $\csenc{\cs}=\stence$ and $\cslab{\cs}=\ell$; otherwise, $\iota(\Gamma,\stence,\ell) = \emptyset$. 
The \emph{union} and \emph{set difference} operations are defined as follows (respectively) for sets of constraint sets: 
\begin{itemize}
    \item $\Gamma \cup \Gamma' = \{\ \cslong{\stence,\ell,\Delta \cup \Delta'} \ | \ \constraintset\in \Gamma,  \iota(\Gamma',\stence,\ell)=\Delta'\}$; 
        \item $\Gamma \setminus \Gamma' = \{\ \cslong{\stence,\ell,\Delta \setminus \Delta'} \ | \ \constraintset\in \Gamma,  \iota(\Gamma',\stence,\ell)=\Delta'\}$.
\end{itemize}
 
 Branches within a tableau will either become \emph{ticked} ($\ticked$) or \emph{crossed} ($\crossed$) throughout the processing and expansion of the tableau. Once all branches are ticked or crossed, the tableau is deemed \emph{complete}, and a model can be extracted from each ticked branch. A counter-model can be extracted if all branches are crossed.
 
 An \emph{$\next$-eventuality} is defined to be a formula of the form $\next(\phi\, \until\, \psi)$, and if such a formula occurs in the constraint set of a node, this implies that a pending request still needs to be fulfilled at a future moment. A \emph{poised branch} is defined to be a branch $\branchdef$ such that $u_{n}$ contains only constraint sets with formulae of the form $p$ and $\neg p$ with $p \in \pset$ and $\next \alpha$ with $\alpha \in \langsltl$, and such that no expansion rules are applicable. Let us now describe how our tableaux are initialised and how rules are applied to them.
 
\textbf{Initialisation:} Tableaux are generated by taking $r$ as the root node with $\nodei{r}= \nest{\cslong{\nest{*},\ell_0,\Phi}}\cup\{\cslong{\stenc{\st},\false,\emptyset} \mid \st\in\spset\}$
as input.

\textbf{Expansion:} We expand a tableau by repeatedly applying the rules in \fig~\ref{fig:expansion-rules} as well as the rules defined below. 

The \emph{expansion rules}, shown in \fig~\ref{fig:expansion-rules}, work as follows: each rule looks for constraint set $\constraintset\in\nodei{u}$ and a formula $\phi \in\Delta$, where $u$ is the current node. By applying the rules, one or two children $u'$ and $u''$ are created with $\nodei{u'} = \nodei{u} \setminus \nest{\cslong{\stence,\ell,\nest{\phi}}} \cup \Gamma_1(\phi)$ and $\nodei{u''} = \nodei{u} \setminus \nest{\cslong{\stence,\ell,\nest{\phi}}} \cup \Gamma_2(\phi)$, respectively. In addition, we have the following rules:

\begin{description}

\item[$\sboxri$] 
Given a node constraint set $\nodei{u}$ and a constraint set $\cs\in\nodei{u}$ with $\standb{\st}\alpha\in \csfor{\cs}$ and $\bot\notin \csfor{\cs}$, a child node $u'$ is created with
$\nodei{u'}=\nodei{u'}\setminus\nest{\cslong{\stence,\ell,\nest{\standb{\st}\alpha}}}\cup\nest{\cslong{\stencF{\st},\false,\nest{\alpha}}}$, and if $\cslab{\cs}=\false$, then a second child node is created with $\nodei{u'}=\nodei{u'}\setminus\nest{\cslong{\stence,\ell,\nest{\standb{\st}\alpha}}}\cup\nest{\cslong{\stence,\false,\nest{\bot}}}$.

\item[$\sboxrii$] 
Given a node constraint set $\nodei{u}$ and two constraint sets $\setib{\cs},\cs'\in\nodei{u}$ with $\csenc{\cs'}\supseteq\csenc{\cs}$, $\alpha\in \csfor{\cs}$, and $\alpha\notin\csfor{\cs'}$, then a child node $u'$ is created with
$\nodei{u'}=\nodei{u'}\cup\nest{\cslong{\csenc{\cs'},\cslab{\cs'},\nest{\alpha}}}$.

\item[$\sdiar^1$] 
Given a node constraint set $\nodei{u}$ and an indexed set $\constraintset\in\nodei{u}$ with $\standd{s}\alpha\in\Delta$, then a child node $u'$ is created such that $\nodei{u'}=\nodei{u}\setminus\nest{\cslong{\stence,\ell,\nest{\standd{s}\alpha}}}\cup\nest{\cslong{\stenc{\st},\ell_i,\nest{\alpha}}}$, with $\ell_i=\ell'$ if there is  $\cslong{\stenc{\st},\ell',\Delta'}\in\nodei{u}$ with $\alpha\in\Delta'$ and otherwise $\ell_i$ is a fresh label.

\item[$\sdiar^2$] 
Given a node constraint set $\nodei{u}$ and two indexed sets $\constraintset,\cslong{\stence,\ell',\Delta'}\in\nodei{u}$ with $\Delta'\subseteq\Delta$ and $\ell'\neq\false$, then a child node $u'$ is created with $\nodei{u'}=\nodei{u}\setminus\nest{\cslong{\stence,\ell',\Delta'}}$.

\item[$\stepr$] Given a poised branch $\branch{u} = \langle u_{0}, \ldots, u_{n} \rangle$, a child $u'$ is added to $\branch{u}$ with $\nodei{u'}= \nest{  \cslong{\stence,\false,\nest{\alpha \mid \next\alpha\in\Delta}} \mid\cslong{\stence,\false,\Delta}\in\nodei{u}} \cup \nest{  \cslong{\stence,\ell,\nest{\alpha \mid \next\alpha\in\Delta}} \mid\cslong{\stence,\ell,\Delta}\in\nodei{u}, \Delta\neq\emptyset}$
\end{description}

\begin{description}
\item[$\contrar$] If $\cs\in\nodei{u_{n}}$ and $\bot \in \csfor{\cs}$, then $u_{n}$ is \emph{crossed}.\footnote{$\bot$ is a shortcut for $\{p,\neg p\}$ for some $p \in \pset$.}
\end{description}

\begin{description}
\item[$\emptyr$] Given a branch $\branch{u} = \langle u_{0}, \ldots, u_{n} \rangle$, if for all $\cs\in\nodei{u_{n}}$ we have $\csfor{\cs} = \emptyset$, then $u_{n}$ is \emph{ticked}.
\end{description}

\begin{description}
\item[$\loopr$] If there exists a poised node $u_{i}$ such that $u_{i} < u_{n}$, $\nodei{u_{i}} = \nodei{u_{n}}$, and all $\next$-eventualities requested in $u_{i}$ are fulfilled in $\branch{u}_{[i+1 \ldots n]}$, then $u_{n}$ is \emph{ticked}.
\end{description}

\begin{description}
\item[$\pruner$] If two positions $i$ and $j$ exist such that $i < j \leq n$, $\nodei{u_{i}} = \nodei{u_{j}} = \nodei{u_{n}}$, and among the $\next$-eventualities requested in these nodes, all those fulfilled in $\branch{u}_{[j+1 \ldots n]}$ are fulfilled in $\branch{u}_{[i+1 \ldots j]}$ as well, then $u_{n}$ is \emph{crossed}.
\end{description}

\begin{figure}
\centering
    \begin{tikzpicture}[scale=1, every node/.style={font=\footnotesize}]
    \node[tableau node] (root) at (0,0) {\cslong{\{\dst\}, \ell_{0}, \nest{\spdia{\dst} (p & X!p), \spbox{\dst}{X p}}}}
      child[tableau edge] {
        node[tableau node]{
          \cslong{\{\dst\}, \ell_{0}, \nest{\spdia{\dst} (p & X!p)}},
          \cslong{\{\dst\}, \false, \nest{X p}}
        }
        child[tableau edge, sibling distance=2cm] {
          node[tableau node] { 
            \cslong{\{\dst\}, \ell_{0}, \emptyset},
            \cslong{\{\dst\}, \false, \nest{X p}},
            \cslong{\{\dst\}, \ell_{1}, \nest{p & X!p}}
          }
          child[tableau edge] {
            node[tableau node] {%
              \cslong{\{\dst\}, \ell_{0}, \emptyset},
              \cslong{\{\dst\}, \false, \nest{X p}},
              \cslong{\{\dst\}, \ell_{1}, \nest{p, X!p}}
            }
            child[tableau edge] {
              node[tableau node, xshift=-0mm] {%
                \cslong{\{\dst\}, \ell_{0}, \emptyset},
                \cslong{\{\dst\}, \false, \nest{X p}},
                \cslong{\{\dst\}, \ell_{1}, \nest{!p, X p, X!p}}
              }
                child[tableau edge,step rule] {
                node[tableau node, xshift=-0mm, label=below:\crossed] {%
                   \cslong{\{\dst\}, \ell_{0}, \emptyset},
                   \cslong{\{\dst\}, \false, \nest{p}},
                   \cslong{\{\dst\}, \ell_{1}, \nest{p, !p}}
                }
              }
            }
          }
        }
      };


      \def\underline(#1){%
        \draw ($(#1.south west)+(0.15,0.1)$) --
              ($(#1.south east)+(-0.15,0.1)$)
}

\end{tikzpicture}
  %
\caption{We provide an example demonstrating a run of our tableau algorithm with the input $\{\spdia{\dst} (p \land \next \neg p), \spbox{\dst} \next p\}$. The rules applied (in order) are $\sboxri$, $\sdiar^1$, $\conr$, $\sboxrii$, $\stepr$, and $\contrar$, thus showing that the tableau creates a crossed branch, i.e. the input set is unsatisfiable.\label{fig:tableaux:examples}}
\end{figure}

\section{Soundness and Completeness}\label{sec:proofs}


We now prove soundness, completeness, and termination of the tableau calculus described above. The proofs are inspired by \cite{GeattiGMR21} and adapted introducing the \emph{quasi-model} infrastructure \cite{Wolter98SatisfiabilityDescriptionLogicsModalOperators} when needed, seldom referring to existing proofs while being as self-contained as possible.

We start with the definition of \emph{pre-models}, which are structures that
summarize a set of models of a formula. We will see that each branch in a
tableau is associated with a pre-model, and \viceversa. We start with the notion
of \emph{atom}, which is a consistent set of formulas from the closure.

\begin{definition}[Atom] An \emph{atom} for a set of formulas $\Phi \subseteq \langsltl$ is a set of formulas $\Delta$ such that
  \begin{enumerate}
  \item $\Delta \subseteq \clo{\Phi}$;
  
  \item $\bot\notin \Delta$;
  
  \item If $\psi \in \Delta$, then either $\Gamma_{1}(\psi) \subseteq \Delta$, or $\Gamma_{2}(\psi)\neq\emptyset$ and $\Gamma_{2}(\psi) \subseteq \Delta$, where $\Gamma_{1}(\psi)$ and $\Gamma_{2}(\psi)$ are defined as in \fig~\ref{fig:expansion-rules};
  
  \item For each $\psi, \psi' \in \clo{\Phi}$, if $\psi \in \Delta$ and $\psi \Vdash \psi'$, then $\psi' \in \Delta$, i.e., $\Delta$ is
  closed by logical deduction (as far as the closure is concerned);
  
  \end{enumerate}
\end{definition}

Atoms are assigned to standpoint encodings and collected into
\emph{timestamps}.  
  
\begin{definition}[Timestamp]\label{def:timestamp}
  We define a \emph{timestamp} $\tstamp=\{ a_1,\  \dots, a_n \} $
  to be a set of \emph{indexed atoms}, that is, each $a_i$ is of the form $\langle \stence_i, \ell_i, \Delta_i\rangle$, with $\stence_i$ a standpoint encoding, $\ell_i$ a label, and ${\Delta_i}$ an atom such that
  
  \begin{description}
    \item[T1] For each $a \in \tstamp$, if $\standds \psi \in \csfor{a}$, then there exists a $a'\in \tstamp$ such that $\csenc{a'}\preceq\stencF{\st}$ and $\psi \in \csfor{a'}$;

    \item[T2] For each $a \in \tstamp$, if $\standbe \psi \in \csfor{a}$, then $\psi \in \csfor{a'}$ for each $a'\in \tstamp$ with $\csenc{a'}\preceq\stencF{\st}$;

    \item[T3] For each $\stence\in E$ with $E=\{\stenc{\st}\mid \st\in\spset\}$, there is $a_{\stence} \in \tstamp$ such that $\csenc{a_{\stence}}=\stence$ and $\csfor{a_{\stence}}=\bigcap\{\csfor{a'}\mid a'\in\tstamp, \csenc{a}\preceq\stence\}$

  \end{description}
  As with the constraint sets, we use the shortcut functions $\csenc{a_i}=\stence_i$, $\cslab{a_i}=\ell_i$ and $\csfor{a_i}=\Delta_i$. Moreover, we say that $a_i=a_j$ iff $\csenc{a_i}=\csenc{a_j}$ and  $\csfor{a_i}=\csfor{a_j}$.
\end{definition}

Timestamps are collected in sequences, and \emph{runs} map each point in time with an atom of the respective timestamp.

\begin{definition}[Run]\label{def:run}
  Let $\tsec=\langle\tstamp_0,\tstamp_1,\dots\rangle$ be an infinite sequence of timestamps. A run $\run$ on $\tsec$ is a map associating each $i\in\mathbb{N}$ with an indexed atom $a\in\tstamp_i$ in such a way that:
  \begin{description}
    \item[R1] For all $a_i=\run(i)$ and $a_j=\run(j)$, $\csenc{a_i}=\csenc{a_j}$.

    \item[R2] If $\next \psi \in \csfor{a_i}$ for $a_i=\run(i)$, then $\psi \in \csfor{a_{i+1}}$ for $a_{i+1}=\run(i+1)$;

    \item[R3] If $\psi_{1} \until \psi_{2} \in \run(i)$, then there is a $j \geq i$ with $\psi_{2} \in \csfor{a_j}$ for $a_j= \run(j)$ and $\psi_{1} \in  \csfor{a_k}$ for $a_k=\run(k)$ for all $i \leq k < j$;
        
    \end{description}
  \end{definition}

  Now, a pre-model is made of a sequence of timestamps and a set of runs that
  encode the sequences of a usual model.

  \begin{definition}[Pre-Model] \label{def:pre-model}
  Let $\premd=\langle\tsec,\runs\rangle$ be a tuple consisting of an infinite sequence of timestamps $\tsec$ and a set of runs $\runs$ such that for every timestamp $\tstamp_i$ and every atom $a\in\tstamp_i$ there is a run such that $\run(i)=a$. $\premd$ is a \emph{pre-model} of $\Phi \subseteq\langsltl$ if the following constraints are satisfied:
  \begin{description}

  \item[P1] There exists an $a\in \tsec(0)$ such that $\csenc{a}=\stenc{*}$ and $\Phi \subseteq \csfor{a}$;

  \item[P2] For each $i \in \mathbb{N}$ and $a \in \tstamp_i$, $\csfor{a}$ is \emph{minimal}, i.e. for any $\psi \in \csfor{a}$, the set $\csfor{a} \setminus\nest{\psi}$ is (a) not an atom or (b) $\tstamp$ ceases to be a timestamp or (c) there ceases to be a run.

  \end{description}
    
\end{definition}

The core feature of pre-models is their correspondence with actual models of formulae, stated in the following.

\begin{restatable}{lemma}{pre-modeliffmodel}\label{lem:pre-model-implies-sat}
  Let $\phi \in \langsltl$, then $\phi$ has a pre-model iff $\phi$ is satisfiable.
\end{restatable}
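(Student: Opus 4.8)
The plan is to prove the two implications separately, using a model-to-runs correspondence in which a one-directional truth lemma (that the atom labelling is semantically sound) carries the forward direction and a direct reading of the semantic clauses carries the backward direction.

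For the forward direction I would turn a pre-model $\premd=\langle\tsec,\runs\rangle$ of $\phi$ into a structure $\model=\tuple{\Pi,\lambda}$. Each run becomes a state sequence $\bar\sigma^\run=\tuple{\sigma_0,\sigma_1,\dots}$ with $\sigma_n=\csfor{\run(n)}\cap\pset$, and I set $\Pi=\{\bar\sigma^\run\mid\run\in\runs\}$ and $\lambda(\st)=\{\bar\sigma^\run\mid\st\in\csenc{\run}\}$, which is well defined since encodings are run-invariant (R1). I would first check that $\model$ is a temporal standpoint structure: $\Pi\neq\emptyset$; each $\lambda(\st)\neq\emptyset$ because the box atom guaranteed by T3 lies on some run (pre-model coverage) and carries $\st$ in its encoding; and $\lambda(\dst)=\Pi$ because $\dst\in\stenc{\st}$ for every $\st$.

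The core is the truth lemma in the form $\psi\in\csfor{\run(n)}\Rightarrow\model,\bar\sigma^\run,n\models\psi$ for all $\psi\in\clo{\Phi}$, proved by induction on $\psi$; since P1 places $\phi$ in an atom at time $0$ with encoding $\stenc{\dst}$, applying it to a run through that atom yields satisfiability, and this single direction is all that is needed. The literal cases use the valuation and consistency (atom condition 2), the Boolean and temporal cases use atom-saturation (condition 3, via $\Gamma_1/\Gamma_2$) together with the run conditions R2 (for $\next$) and R3 (for until-eventuality fulfilment), exactly as in the LTL quasi-model argument \cite{GeattiGMR21}. The genuinely new cases are $\spdia{\st}\psi'$ and $\spbox{\st}\psi'$: for the diamond, T1 supplies a sharper atom containing $\psi'$ sitting on a run whose sequence is in $\lambda(\st)$, and the box is dual via T2. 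Both rely on the correspondence $\st\in\csenc{\run}$ iff $\bar\sigma^\run\in\lambda(\st)$ and on $\stenc{\st}\subseteq\csenc{\run}$ whenever $\st\in\csenc{\run}$.

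For the backward direction I would fix $\model,\bar\sigma\models\phi$ and let timestamp $\tstamp_i$ consist, for each $\bar\tau\in\Pi$ and each $\st$ with $\bar\tau\in\lambda(\st)$, of a labelled atom with encoding $\stenc{\st}$ and formula set $\{\psi\in\clo{\Phi}\mid\model,\bar\tau,i\models\psi\}$, augmented by the unlabelled box atoms mandated by T3; runs simply track a fixed sequence-with-encoding across time. Conditions R1--R3 and T1, T2 then read off directly from the semantic clauses for $\next$, $\until$, $\spdia{\st}$, and $\spbox{\st}$, and P1 holds because $\bar\sigma\in\lambda(\dst)$ and its type at time $0$ contains $\phi$. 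I expect two interlocking points to be the real work. The first is making the encoding machinery faithful to the $\preceq$-semantics: the equivalence above, the implication $\st\in\csenc{\run}\Rightarrow\stenc{\st}\subseteq\csenc{\run}$ used in the box case, and the transport of each declared $\st\preceq\sp$ into $\lambda(\st)\subseteq\lambda(\sp)$, all require that $\stenc{\cdot}$ be closed under the sharper-statements in $\Phi$ (in particular that $\st\preceq\dst$ and transitive consequences are present). The second, and I expect the main obstacle, is the backward construction's compliance with T3 and the minimality requirement P2: the box atom $\bigcap\{\csfor{a'}\mid\csenc{a'}\preceq\stence\}$ must itself be a saturated atom, yet the naive intersection of full types need not satisfy atom condition 3 --- a disjunction common to a standpoint may split across its sequences. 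I would therefore not take full types verbatim but select which labelled atoms populate each timestamp and prune each atom to a minimal saturated set, showing that this pruning preserves atomhood and every instance of T1--T3 and R1--R3 simultaneously.
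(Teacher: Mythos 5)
Your proposal follows essentially the same route as the paper's proof: the forward direction builds the same structure (runs become state sequences, $\lambda(\st)$ is read off the run encodings) and establishes the same one\mbox{-}directional truth lemma by induction, using T1/T2 for the standpoint modalities and R1/R2/R3 for the temporal ones, while the backward direction populates each timestamp with (minimized) types of the state sequences and adds the T3 box atoms as intersections, exactly as in the paper's appendix. Your closing concern---that the naive intersection of full types need not be saturated---is a genuine subtlety which the paper addresses only implicitly through its minimality condition (P2) and pruning of redundant atoms, so your proposed fix is in the same spirit as, and if anything more explicit than, the published argument.
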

\begin{proof}[Proof sketch]
  The proof is articulated but straightforward, following the definitions of
  pre-models and the semantics of $\sltl$. See \cref{app:proofs} for the full
  proof.
\end{proof} 

With the concept of pre-models we can now prove soundness and completeness.

\subsection{Soundness}

Here we prove that the tableau system is sound, that is, if a complete
tableau for a set of formulae $\Phi$ has a successful branch, then $\Phi$ is satisfiable. 
The proof shows how a pre-model for $\Phi$ can be extracted
from a successful branch of a complete tableau. Intuitively, the expansion of non-poised
nodes builds the set of constraint sets that will form the current timestamp,
and the application of the $\stepr$ rule to a poised node marks the advancement
to the next one.


\begin{definition}[Step node]
Consider a branch $\branchdef$ of a complete tableau $T$. A poised node $u_i$ is said to be a step node if either $u_i=u_n$ or $u_{i+1}$ is the child of $u_i$ added by applying the $\stepr$ rule.
\end{definition}

Only step nodes will be considered when looking at a given tableau branch to build the corresponding pre-model of $\Phi$. Now we can state how exactly to perform this extraction. Let us first define how single timestamps and their atoms are built from each step node.



\begin{definition}[Atom of a tableau node]
Let $\Phi \subseteq \langsltl$, $u_i$ be a step node of a tableau for $\Phi$ and $\cs\in\nodei{u_i}$. 
The atom extracted from $\cs$, written $\csatomf{\cs}$, is the indexed atom $\csatom$ such that $\csenc{\csatom}=\csenc{\cs}$ and $\csfor{\csatom}=\Delta$ where $\Delta$ is the closure by logical entailment (within $\clo{\Phi}$) of $\csfor{\cs}$, $\Delta= \cl{\csfor{\cs}}$.
\end{definition}

\begin{definition}[Timestamp of a tableau node]
Let $u_i$ be a step node of a tableau for $\Phi$. 
The timestamp extracted from $\nodei{u_i}$ is defined as $\csatomf{\nodei{u_i}}=\{\csatomf{\cs}\mid \cs\in\nodei{u_i}\}$
\end{definition}



Let us now define how to extract a pre-model from a successful branch of a complete tableau.

\begin{lemma}\label{lem:success-implies-pre-model}
 Let $\Phi \subseteq \langsltl$ and let $T$ be a complete tableau for $\Phi$. If $T$ has a successful branch, then there exists a pre-model for $\Phi$.
\end{lemma}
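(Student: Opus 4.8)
The plan is to read a pre-model $\premd=\langle\tsec,\runs\rangle$ off the successful branch $\branchdef$, treating its \emph{step nodes} as the moments of the timestamp sequence. First I would list the step nodes $v_0<v_1<\dots<v_m$ occurring along the branch (the poised nodes at which $\stepr$ fires, together with the final node $u_n=v_m$, which is poised whenever it is ticked) and set $\tstamp_j:=\csatomf{\nodei{v_j}}$, the timestamp extracted from each via the \emph{timestamp of a tableau node}. To make $\tsec$ infinite I would split on how the branch was ticked: if by $\emptyr$, every constraint set of $v_m$ has empty formula component, so I repeat $\tstamp_m$ forever; if by $\loopr$, some earlier poised step node $v_\ell$ satisfies $\nodei{v_\ell}=\nodei{v_m}$ with all pending $\next$-eventualities fulfilled in between, so I form the lasso $\tsec=\tstamp_0\cdots\tstamp_{\ell-1}\,(\tstamp_\ell\cdots\tstamp_{m-1})^\omega$ (legitimate since $\nodei{v_\ell}=\nodei{v_m}$ forces $\tstamp_\ell=\tstamp_m$).

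Next I would build $\runs$ by threading constraint sets through the step nodes. The structural fact driving this is that $\stepr$ preserves the identity $(\stence,\ell)$ of every surviving set while deleting the leading $\next$ of each $\next\alpha$; hence a set persisting from $v_j$ to $v_{j+1}$ fixes a thread along which the encoding is constant and ``next'' obligations are honoured one step later. By clause~(a) of the node-constraint-set definition, each standpoint encoding has exactly one unlabelled (box) set at every node, so the box sets supply one persistent, total run per encoding. Each labelled (diamond) set, created by $\sdiar^1$, starts a thread that I trace forward (the label-reuse convention of $\sdiar^1$ makes it close periodically inside the lasso) and continue backward to time $0$; taking $\runs$ to be all such threads, every atom of every timestamp lies on at least one run, as the pre-model definition demands.

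It then remains to verify the defining conditions. The timestamp conditions T1--T3 hold because each $v_j$ is poised in a complete tableau, hence saturated under the standpoint rules: T1 from $\sdiar^1$ (each diamond has a witness of sharper encoding), T2 from $\sboxri$ and $\sboxrii$ (box formulas are pushed to all sharper sets), and T3 from $\sboxrii$ plus the closure taken in the atom extraction (the inclusion $\subseteq$ is immediate, while $\supseteq$ uses saturation of the box set). For the runs, R1 is immediate from the constant encoding, R2 from the $\next$-stripping performed by $\stepr$, and R3 (until-obligations discharged) from the $\loopr$ side-condition, which guarantees that each pending $\next(\phi\until\psi)$ is fulfilled within the repeated loop segment (R3 is vacuous in the $\emptyr$ case). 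Condition P1 follows by carrying $\Phi$ from the root set $\cslong{\nest{*},\ell_0,\Phi}$ down the branch, so that the atom of encoding $\stenc{*}$ at time $0$ contains $\Phi$; consistency is guaranteed because the branch is successful. For P2 (minimality) I would either observe that the extracted atoms carry only rule-forced formulas, or minimise by deleting removable formulas one at a time while preserving atomhood, the timestamp conditions, and the existence of runs.

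The main obstacle is the run construction rather than the bookkeeping. Producing \emph{total} runs over $\N$ that simultaneously cover every atom, keep the encoding constant (R1), and respect $\next$ at every step-junction (R2) is delicate exactly for the labelled sets born mid-branch by $\sdiar^1$: such a transient thread must still be defined back to time $0$, and naively grafting it onto a box run can violate R2 at the splice point. The clean way is to work along the periodic structure of the lasso, closing threads into periodic runs and invoking the $\loopr$ fulfilment condition to settle R3; this is the technical heart of the argument, whereas T1--T3 and P1 are comparatively routine consequences of poisedness and saturation.
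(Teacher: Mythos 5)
Your proposal is correct and follows essentially the same route as the paper's proof: extract timestamps from the step nodes via $\csatomf{\cdot}$, unroll the branch into a lasso according to whether $\loopr$ or $\emptyr$ ticked it, verify T1--T3 by saturation (non-applicability of $\sdiar^1$, $\sboxri$/$\sboxrii$, $\sdiar^2$), and verify R1--R3 via $\stepr$ and the loop condition. The splice difficulty you flag for runs entering a freshly created labelled set is resolved by $\sboxrii$-saturation, which makes the box set's formulas a subset of every labelled set with the same (or sharper) encoding, so grafting a labelled thread onto the box run of its encoding preserves R2; the paper itself sidesteps this point by simply taking $\runs$ to be all possible runs on $\tsec$ and arguing that each atom admits a forward extension.
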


\begin{proof} 

Let $\branchdef$ be a successful branch of $T$ and let $\branchvdef$ be the subsequence of step nodes of $\branch{u}$. Intuitively, a pre-model for $\Phi$ can be obtained from $\branch{v}$ by building the atoms from the labels of the step nodes, and extending them to infinite sequences. If the branch has been accepted by the $\loopr$ rule, we can identify a position $0 \leq k \leq m$ in $\branch{v}$ such that $\nodei{v^{k}}=\nodei{v^m}$ 
and all the X-eventualities requested in $v^{k}$ are fulfilled in $v_{[k\dots m]}$. If instead $\branch{v}$ has been accepted by the $\emptyr$ rule and in particular there are no X-eventualities requested, hence setting $k = m$ we obtain the same effect. 

Therefore, we can extract from $v$ the periodic sequence of timestamps $\tsec= \tsec_0\tsec^\omega_t $, where $\tsec_0 = \langle \csatomf{\nodei{v_0}}, \dots, \csatomf{\nodei{v_k}} \rangle$, and either $\tsec^\omega_t  = \langle \csatomf{\nodei{v_k+1}}, \dots, \csatomf{\nodei{v_m}} \rangle$ or $\tsec^\omega_t  = \langle \csatomf{\nodei{v_m}} \rangle$ depending on which rule accepted the branch, respectively the $\loopr$ or the $\emptyr$ rule. In other words, we build a periodic pre-model that infinitely repeats the fulfilling loop identified by the $\loopr$ rule, or the last empty node otherwise. Then let $K : \mathbb{N} \rightarrow \mathbb{N}$ be the map from positions in the pre-model to their original positions in the branch, which is defined as $K(i) = i $ for $0 \leq i < k$, and for $i \geq k$ is defined either as $K(i)=k+((i-k)\, \mathrm{mod}\, T)$, with $T=m-k$ ($\loopr$ rule), or as $K(i) = k$ ($\emptyr$ rule).

Then, we set $\runs$ to be the set of all possible runs on $\tsec$ and we show that $\premd=\langle\tsec,\runs\rangle$ is a pre-model of $\Phi$. Notice that by the initialisation of the tableau we have $\Phi\in \csfor{a_0}$ for some $a_0\in\tsec(0)$ and atoms are minimal by definition.

First, we show that the three conditions of timestamp in \Cref{def:timestamp} are satisfied for $\tsec(i)$ with $i\geq 1$. It is easily checked that T1 is satisfied by the non-applicability of $\sdiar^1$, T2 by the non-applicability of both $\sboxri$ and $\sboxrii$ and T3 by the non-applicability of $\sdiar^2$. 

Let us now show that each $\run\in\runs$ fulfils the conditions in \Cref{def:run} of a run for $i\geq 1$. For condition R1, we notice that for every $\stence\in E$ there is some $a\in\tsec(i)$ by the construction of the tableau, since during initialisation one box-indexed atom is created per standpoint expression and such atoms are always propagated by the $\stepr$ rule. Hence for each $i\geq 1$ there is some $a_j\in\tsec(j)$ such that $\run(i)=a$ and $\run(j)=a_j$ and $\csenc{a}=\csenc{a_j}$.

Assume now that $\next\psi\in\csfor{\run(i)}$. Being an elementary formula, we can observe that we must have $\next\psi\in\csfor{\cs}$ for some $\cs\in\nodei{u_i}$. Two cases have to be considered. If $v_{K(i+1)} = v_{K(i)+1}$, i.e., the next atom comes from the actual successor of the current one in the tableau branch, then, by the $\stepr$ rule, there is $a_{i+1}\in\tsec(i+1)$ such that $\csenc{a_{i+1}}=\csenc{a}$ and $\psi\in\csfor{a_{i+1}}$. Otherwise, $\tsec(i)=\tsec(m)=\csatomf{\nodei{v_m}}$, and $v_m$ was ticked by the $\loopr$ rule (since at least $\csfor{a}$ is not empty), and thus $\tsec(i+1)=\csatomf{\nodei{v_k+1}}$ for some $k<m$ such that $\nodei{v_k}=\nodei{v_m}$. Hence $\next\psi\in\csfor{\cs'}$ with $\cs'\in\nodei{v_k}$, and by the step rule applied to $v_k$ we obtain that $\psi\in\csfor{\cs''}$ with $\cs''\in\nodei{v_k+1}$, hence  there is $a''_{i+1}\in\tsec(i+1)$ such that $\csenc{a''_{i+1}}=\csenc{a}$ and $\psi\in\csfor{a''_{i+1}}$ as well in this case, hence condition R2 is also satisfied. Finally, the case of $\psi_1\until\psi_2\in\csfor{\run(i)}$ (condition R3) is then straightforward in view of the expansion rules definition.
With this, we have shown that a run can be created for each $a\in\tstamp_i$ and $i\geq 0$ satisfying all the conditions, and hence $\premd$ is a pre-model.
\end{proof}

The above results let us conclude the soundness of the tableau system.

\begin{theorem}[Soundness]
 Let $\Phi \subseteq \langsltl$ and let $T$ be a complete tableau for $\Phi$. If $T$ has a successful branch, then $\Phi$ is satisfiable.
\end{theorem}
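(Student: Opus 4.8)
The plan is to derive satisfiability by chaining together the two lemmas established immediately above, so that the theorem becomes essentially a corollary. First I would take the successful branch guaranteed by the hypothesis and feed it into \Cref{lem:success-implies-pre-model}: since $T$ is a complete tableau for $\Phi$ possessing a successful branch, that lemma hands me a concrete pre-model $\premd = \langle \tsec, \runs \rangle$ of $\Phi$. No further construction is needed at this stage, as the extraction of the periodic sequence of timestamps and the set of all runs has already been carried out in the proof of that lemma.

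Second, I would pass this pre-model to \Cref{lem:pre-model-implies-sat}, whose forward direction asserts that possessing a pre-model entails satisfiability; this immediately yields that $\Phi$ is satisfiable and closes the argument. The one point to reconcile is that \Cref{lem:pre-model-implies-sat} is phrased for a single formula, whereas here I have a set $\Phi$. Since the input sets handled by the tableau are finite, I would bridge this by reading $\Phi$ as the single formula $\bigwedge \Phi$: a pre-model of $\Phi$ in the sense of \Cref{def:pre-model} (whose condition P1 requires $\Phi \subseteq \csfor{a}$ for some root atom $a$ with $\csenc{a} = \stenc{*}$) is precisely a pre-model of $\bigwedge \Phi$, and a temporal standpoint structure satisfies $\bigwedge \Phi$ exactly when it satisfies every member of $\Phi$, so the two notions of satisfiability coincide.

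The honest observation is that this theorem carries almost no weight of its own: the genuine difficulty has been localised into the two supporting lemmas. The main obstacle therefore lies not here but in \Cref{lem:success-implies-pre-model}, where one must verify that the timestamps extracted from the step nodes satisfy T1--T3 (tracing these back to the non-applicability of the standpoint rules $\sdiar^1$, $\sboxri$, $\sboxrii$, and $\sdiar^2$) and that the set of runs satisfies R1--R3 (tracing R2 back to $\stepr$ and handling the periodic wrap-around induced by the $\loopr$ rule via the position map $K$). Once those conditions are discharged, the present theorem is a two-step composition, and I would keep its proof to just a couple of lines.
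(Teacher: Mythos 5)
Your proposal is correct and matches the paper's own proof exactly: the paper likewise obtains a pre-model from the successful branch via \Cref{lem:success-implies-pre-model} and then invokes \Cref{lem:pre-model-implies-sat} to get a model, with all the real work localised in those two lemmas. Your remark about reconciling the set $\Phi$ with the single-formula phrasing of \Cref{lem:pre-model-implies-sat} is a reasonable tidying-up that the paper itself glosses over.
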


\begin{proof}
Extract a pre-model for $\Phi$ from the successful branch of T as shown in
\Cref{lem:success-implies-pre-model}, and then obtain from it an actual model for the formula as shown
by \Cref{lem:pre-model-implies-sat}.
\end{proof}

\subsection{Completeness}

We now prove the completeness of the tableau system, i.e., if a set of formulae $\Phi$
is satisfiable, then any complete tableau T for it has an accepting branch. The
proof uses a pre-model for $\Phi$, which we know exists if the formula is
satisfiable, as a guide to suitably descend through the tableau to look for an
accepted branch. We first describe how to perform such a descent. Then, we will
show how to make sure that this descent must obtain an accepted branch. The
descent is performed as follows.


\begin{lemma}[Extraction of the branch]\label{lemma:branch-extraction}
 Let $\premd=\langle\tsec,\runs\rangle$ be a pre-model for a set of formulae $\Phi$. Then, any complete tableau $T$ for $\Phi$ has a label mapping $\mlabelf$ and a branch $u$, with a sequence of step nodes $\branchvdef$, such that for $0 \leq i \leq m$ and $a \in \tsec(i)$, we have one $\cs\in\nodei{v_i}$ such that $a=\csatomf{\cs}$.
\end{lemma}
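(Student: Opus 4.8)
The plan is to construct the branch $u$ top-down, using the pre-model $\premd$ as an oracle that resolves every choice left open by the tableau rules. The construction maintains an invariant tying the node currently under construction to a timestamp index $i$: for the node $u$ being built there is an assignment matching each constraint set $\cs \in \nodei{u}$ to an indexed atom $a \in \tsec(i)$ with $\csenc{\cs} = \csenc{a}$, agreeing with the label mapping $\mlabelf$ on labels, and with $\csfor{\cs} \subseteq \csfor{a}$. At the root this holds for $i = 0$: by \textbf{P1} the initial set $\cslong{\nest{\dst}, \ell_0, \Phi}$ matches the atom $a \in \tsec(0)$ with $\csenc{a} = \stenc{\dst}$ and $\Phi \subseteq \csfor{a}$, while each freshly created box set $\cslong{\stenc{\st}, \false, \emptyset}$ matches the encoding-atom delivered by \textbf{T3}.

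The inductive step case-splits on the applicable rule. For the rules of \cref{fig:expansion-rules}, the matched atom $a$ satisfies clause~(3) of the atom definition, so when a formula $\phi \in \csfor{\cs} \subseteq \csfor{a}$ is decomposed, at least one of $\Gamma_1(\phi), \Gamma_2(\phi)$ is contained in $\csfor{a}$; I descend into that child, preserving the invariant at the same index $i$. The standpoint rules are steered by the timestamp conditions: $\sboxri$ and $\sboxrii$ by \textbf{T2}, which guarantees the box content $\alpha$ already sits in every target atom, so the non-$\bot$ child is always the one to take; $\sdiar^1$ by \textbf{T1}, which supplies a witnessing atom $a'$ and fixes the label of the new constraint set, thereby defining $\mlabelf$ on any fresh label; and $\sdiar^2$ by \textbf{T3}. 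Since every matched atom is $\bot$-free (clause~(2) of the atom definition), the $\contrar$ rule never fires on the chosen branch.

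When no expansion rule applies the node is poised, and applying $\stepr$ moves the construction from index $i$ to $i+1$. Here I invoke run condition \textbf{R2}: each formula $\next\alpha$ lying in a constraint set lies in its matched atom $a \in \tsec(i)$, so $\alpha$ lies in the matched atom of $\tsec(i+1)$, which re-establishes the invariant. The poised nodes reached this way are precisely the step nodes $v_0, v_1, \dots$; moreover, once a constraint set's formulas are fully decomposed and closed under logical entailment the containment $\csfor{\cs} \subseteq \csfor{a}$ tightens to the equality $a = \csatomf{\cs}$ demanded by the statement. I iterate until reaching the index $m$ at which the pre-model's fulfilling loop closes, yielding $v_0, \dots, v_m$ matching $\tsec(0), \dots, \tsec(m)$.

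The main obstacle will be two pieces of bookkeeping. First, I must confirm that only finitely many rule applications separate consecutive step nodes, so the descent reaches a poised node at each index; this rides on the same complexity measure used for termination, since every expansion shrinks a formula while the standpoint rules can only add constraint sets and formulas bounded by the finite closure $\clo{\Phi}$. Second, and more delicate, is the coherence of $\mlabelf$ under $\sdiar^1$: distinct diamond requests in $\tsec(i)$ witnessed by the same atom must receive the same label, which is exactly what the label-reuse clause of $\sdiar^1$ and uniqueness condition~(b) of node constraint sets allow, and what \textbf{T1} together with the minimality \textbf{P2} of the pre-model makes consistent. A final check is that the followed branch is never crossed by $\pruner$ before index $m$, which holds because the pre-model fulfils every eventuality within its loop.
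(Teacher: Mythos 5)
Your proposal follows essentially the same route as the paper: a guided descent through the tableau using the pre-model as an oracle, an invariant relating constraint sets to indexed atoms at a timestamp index that advances only at $\stepr$ (the paper's map $J$), rule-by-rule justification via \textbf{T1}/\textbf{T2}/\textbf{T3}, \textbf{R2}, and clause~(3) of the atom definition, and a final appeal to minimality (\textbf{P2}) to tighten $\csfor{\cs}\subseteq\csfor{a}$ into the equality $a=\csatomf{\cs}$. The only cosmetic difference is that you state the invariant as "each constraint set is matched to an atom" whereas the paper quantifies in the other direction ("each atom of $\tsec(J(k))$ is witnessed by a constraint set"), which is the form the lemma's conclusion actually needs; both write-ups rely on the same minimality argument to reconcile the two directions.
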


\begin{proof}
Let $\premd=\langle\tsec,\runs\rangle$ be a pre-model. 
To find $\branch{u}$, we traverse the tree using $\tsec$ as a guide, starting from the root $u_0$, building a sequence of branch prefixes $\branch{u_i} = \langle u_0,\dots, u_i\rangle$, suitably choosing $\branch{u_{i+1}}$ at each step among the children of $\branch{u_i}$. We maintain a non-decreasing function $J : \mathbb{N} \rightarrow \mathbb{N}$ that maps positions in $\branch{u_i}$ to positions in $\tsec$ such that for each $a \in \tsec(J(k))$, there is some $\cs\in\nodei{u_k}$ such that $\csenc{\csatom}=\csenc{a}$ and   $\csfor{\csatom}\subseteq\csfor{a}$ with $\csatomf{\cs}=\csatom$, for each $0 \leq k \leq i$. Moreover, we maintain a labelling function $\mlabelf$ witnessing that relation,  $\mlabel{\cs}=a$
We start from $u_0 = \langle u_0\rangle$ and $J(0) = 0$. Let
$\cs_0=\cslong{\nest{*},\ell_0,\Phi}$ Notice that the tableau is
initialised with 
$$\nodei{r}= \nest{\cs_0}\cup\{\cs_\stence=\cslong{\stence,\false,\emptyset} \mid \stence\in E\}$$
Moreover, we have $\Phi \subseteq \csfor{a_0}$ for some $a_0 \in \tsec(0)$, and there is some $a_\stence \in \tsec(0)$ for each $\stence\in E$, by the definition of a pre-model. 
Thus, we set $\mlabel{\cs_{0}}=a_0$, and for each $\stence\in E$ we set $\mlabel{\cs_{\stence}}=a_{\stence}$. Thus, the base case clearly holds.

Then, at each step $i>0$, we choose $u_{i+1}$ among the children of $u_{i}$ as follows, depending on the expansion rule. We show that for each $a \in \tstamp(J(i))$, there is some $\cs\in\nodei{u_k}$ such that $\csenc{\csatom}=\csenc{a}$ and   $\csfor{\csatom}\subseteq\csfor{a}$ with $\csatomf{\cs}=\csatom$. We show the main cases and direct the reader to \cite{GeattiGMR21} for the rest.
\begin{description}
    \item[$\sboxri$] If $u_i$ is not a step node and was expanded by the $\sboxri$ rule then it has a single child which is chosen as $u_{i+1}$. We leave $\mlabelf$ unchanged and define $J(i + 1) = J(i)$, since we do not advance to the next position in the pre-model; We show that for the node $u_{i+1}$ the condition holds. By the definition of $\sboxri$ we have $\nodei{u_{i+1}}=\nodei{u_i}\setminus\nest{\cslong{\stence,\ell,\nest{\standb{e}\psi}}}\cup\nest{\cslong{\stencF{\st},\false,\nest{\psi}}}$. By the definition of a pre-model, if $\standb{e}\psi\in \csfor{a}$ for some $a\in\tsec(J(i))$, then $\psi \in \csfor{a'}$ for each $a'\in \tsec(J(i))$ with $\csenc{a'}\preceq\stencF{\st}$, and hence the condition will still be satisfied.

    \item[$\sboxrii$] If $u_i$ is not a step node and was expanded by the $\sboxrii$ rule then it has a single child which is chosen as $u_{i+1}$. We leave $\mlabelf$ unchanged and define $J(i + 1) = J(i)$, since we do not advance to the next position in the pre-model; We show that for the node $u_{i+1}$ the condition holds. By the definition of $\sboxrii$ we have $\nodei{u_{i+1}}=\nodei{u_i}\cup\nest{\cslong{\csenc{\cs'},\cslab{\cs'},\nest{\psi}}}$ for two constraint sets $\setib{\cs},\cs'\in\nodei{u_i}$ with $\csenc{\cs'}\preceq\csenc{\cs}$, $\psi\in \csfor{\cs}$, and $\psi\notin\csfor{\cs'}$. 
    And again by the definition of a pre-model, this means that $\psi\in \csfor{a_{\stence}}$ and thus for all  $a'\in \tsec(J(i))$ with $\csenc{a'}\preceq\stence$ then also $\psi \in \csfor{a'}$, and hence the condition will still be satisfied.

    \item[$\sdiar^1$] If $u_i$ is not a step node and was expanded by the $\sdiar^1$ rule then it has a single child which is chosen as $u_{i+1}$. We define $J(i + 1) = J(i)$ since we do not advance to the next position in the pre-model; 
    We show that for the node $u_{i+1}$ the condition holds. By the definition of $\sdiar^1$ we have $\nodei{u'}=\nodei{u}\setminus\nest{\cslong{\stence,\ell,\nest{\standd{s}\psi}}}\cup\nest{\cslong{\stenc{\st},\ell_i,\nest{\psi}}}$. 
    If $\ell_i$ is not a fresh label, then $\mlabelf$ is unchanged and $\nodei{u'}$ only removes $\standd{s}\psi$ from $\nodei{u}$, and hence the condition holds trivially. Assume $\ell_i$ is a fresh label. 
    By construction, we have $\cslong{\stence,\ell,\Delta}\in\nodei{u}$ with $\standd{s}\psi\in\Delta$, and a newly introduced $\cs'\in\nodei{u'}$ with $\cs'=\cslong{\stenc{\st},\ell_i,\nest{\psi}}$. 
    By induction, there is some $a\in\stseq(J(i))$ such that $\standd{s}\psi \in \csfor{a}$ and by the definition of a pre-model, there is some $a'\in\stseq(J(i))$ such that $\csenc{s}\preceq\stencF{\st}$ and $\psi \in \csfor{a'}$. Thus, we update $\mlabelf$ such that $\mlabel{\cs'}=a'$ and with this the condition is satisfied.

    \item[$\sdiar^2$]  If $u_i$ is not a step node and was expanded by the $\sdiar^2$ rule then it has a single child which is chosen as $u_{i+1}$. We define $J(i + 1) = J(i)$ since we do not advance to the next position in the pre-model; 
    We show that for the node $u_{i+1}$ the condition holds. By the definition of $\sdiar^2$ we have $\nodei{u'}=\nodei{u}\setminus\nest{\cslong{\stence,\ell',\Delta'}}$ with $\standd{s}\psi\in\Delta$ for $\constraintset,\cslong{\stence,\ell',\Delta'}\in\nodei{u}$ with $\Delta'\subseteq\Delta$ and $\ell'\neq\false$. Thus, we update $\mlabelf$ to the shrunk set and the condition is still satisfied. 
    
    \item[$\disr$, $\untr$, $\relr$, $\conr$, $\evenr$ $\alwr$] These are straightforward. Refer to Lemma 3 in \cite{GeattiGMR21}.

    \item[$\stepr$] If $u_i$ is a step node but not a leaf, then it has a single child which is chosen as $u_{i+1}$, leaving $\mlabelf$ unchanged and defining $J(i + 1) = J(i) + 1$ since we need to advance to the next position in the pre-model as well; We show that for the node $u_{i+1}$ the condition holds: For each atom $\csatom$ extracted from the tableau node, there is some $a \in \tsec(J(i+1))$ such that $\csenc{\csatom}=\csenc{a}$ and $\csfor{\csatom}\subseteq\csfor{a}$. By the definition of $\stepr$ we have  $\nodei{u'}= \nest{  \cslong{\stence,\false,\nest{\psi \mid \next\psi\in\Delta}} \mid\cslong{\stence,\false,\Delta}\in\nodei{u}} \cup \nest{  \cslong{\stence,\ell,\nest{\psi \mid \next\psi\in\Delta}} \mid\cslong{\stence,\ell,\Delta}\in\nodei{u}, \Delta\neq\emptyset}$. By the definition of a pre-model, if $\next\psi\in \csfor{a}$ for some $a\in\tsec(J(i))$ then $\psi\in \csfor{a'}$ for some $a'\in\tsec(J(i))$. Thus the condition follows.
\end{description}

Now, let $\branchdef$ be the branch found as described above, and let $\branchvdef$ be the sequence of its step nodes. Since the value of $J(i)$ is incremented only when an application of the $\stepr$ rule is traversed, for each $a \in \tsec(i)$, there is some $\cs\in\nodei{v_i}$ such that $\csenc{\csatom}=\csenc{a}$ and $\csfor{\csatom}\subseteq\csfor{a}$ with $\csatomf{\cs}=\csatom$. 
Moreover, for the minimality of the atoms in the pre-model, required by \Cref{def:pre-model}, we know that for all $a\in\tstamp_i$ then any formula $\psi\in\csfor{a}$ is either $\psi\in\Phi$ and $a=a_0$ or it has been obtained by the expansion of $\next$, $\standbs$ or $\standds$, or by the closure by logical entailment. Similarly, by the construction of the tableau, any $\phi\in\csatom$ has been obtained either by an application of a $\stepr$ fulfilling the $\next$-eventualities or by an application of $\sdiar^1$, $\sboxri$ or $\sboxrii$, fulfilling the standpoint eventualities, or by an expansion rule fulfilling the closure by logical entailment. Finally, $\sdiar^2$ fulfils the minimality criteria by deleting subsumed diamond-atoms. Hence we can conclude that for each $a \in \tsec(i)$, there is exactly one $\cs\in\nodei{v_i}$ such that $\csenc{\csatom}=\csenc{a}$ and $\csfor{\csatom}\subseteq\csfor{a}$ with $\csatomf{\cs}=\csatom$.
\end{proof}

The particular branch found as described above might, in general, be crossed.
However, it is immediate to note that it cannot possibly have been crossed by an
application of the $\contrar$ rule, since this would imply that the pre-model
itself is contradictory (which cannot happen because of
\cref{lem:pre-model-implies-sat}). Hence, if a crossed leaf is found, it has
been crossed by the $\pruner$ rule. We can, however, defin a particular class of
models (and their pre-models) such that when we descend through the tableau
following any model of this class, we cannot possibly find a node crossed by the
$\pruner$ rule, neither. This class is called \emph{greedy pre-models}. 

To define the notion of greedy pre-models, consider a pre-model
$\premd=\langle\tsec,\runs\rangle$. Let $X\subset\clo{\phi}$ be the set of all
the $\next$-eventualities in $\clo{\phi}$. For each run $\run\in\runs$ and each
$i\ge0$, the \emph{distance vector} $d^\run_i\in\N^X$ is a function mapping each
$\next$-eventuality $\psi$ to the distance $d^\run_i(\psi)$ of the first
position where it is fulfilled, if $\psi\in\run(i)$, or zero otherwise. For
example, if $\psi_1 \until \psi_2\in\run(4)$, and $\psi_2\in\run(7)$, then
$d^\run_4(\psi_1\until\psi_2)=3$. 

Two distance vectors are compared component-wise, \ie we define a partial order
such that $v\prec v'$ iff $v(\psi)<v'(\psi)$ for all $\psi\in X$. Then, we
compare two runs $\run$ and $\run'$ lexicographically, that is, $\run\prec\run'$
iff there is an $i\ge0$ such that $d^\run_j = d^{\run'}_j$ for all $j < i$ and
$d^\run_i \prec d^{\run'}_i$.

Intuitively, if $\run\prec\run'$, there is a point $i$ where $\run'$ uselessly
delays the fulfillment of an $\next$-eventuality, while $\run$ fullfils it
before, all the rest being equal before $i$.

\begin{definition}[Greedy pre-models]
  A run $\run$ for a formula $\phi$ is \emph{greedy} if there is no other
  $\run'$ such that $\run'\prec\run$. A pre-model $\premd$ for $\phi$ is greedy
  if all its runs are greedy.
\end{definition}

One can show that greedy pre-models actually exist.

\begin{lemma}[Limit of a sequence of runs]\label{lemma:greedy-pre-model-exists}
  Let $\run_1\succ\run_2\succ\dots$ be an infinite descending sequence of runs.
  Then, there exists a run $\run^\omega$ such that $\run^\omega\preceq\run_i$
  for all $i \geq 0$.
\end{lemma}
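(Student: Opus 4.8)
The plan is to obtain $\run^\omega$ as a \emph{position-wise limit} of the descending chain and then to check that it is a genuine run lying below every $\run_i$. Two finiteness observations drive the construction. First, up to the atom equality of \Cref{def:timestamp}, each timestamp $\tstamp_j$ contains only finitely many atoms, since an atom is fixed by its encoding together with a subset of the finite closure $\clo{\phi}$; thus at each position a run has only finitely many relevant values. Second---and this is where the descending hypothesis is essential---the distance vectors stabilise position by position: for every $j$ the sequence $d^{\run_1}_j, d^{\run_2}_j, \dots$ is eventually constant, say equal to $\bar d_j \in \N^X$. I would prove this by induction on $j$, using that $\prec$ on $\N^X$ is well-founded (a $\prec$-decreasing sequence strictly lowers every coordinate, so any fixed coordinate descends in $\N$ and must terminate): from $\run_{k+1}\prec\run_k$, at the first \emph{position} where the two distance-vector sequences differ the vector for $\run_{k+1}$ is $\prec$ that of $\run_k$; hence, once the distance vectors at all positions $<j$ have stabilised, the vector at position $j$ can only remain equal or strictly $\prec$-decrease as $k$ grows, which can happen only finitely often.

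Given the stabilised values $\bar d_j$, I would build $\run^\omega$ by a standard diagonalisation. Let $T_i$ be the (cofinite, hence infinite) tail of indices on which $d^{\run_k}_j = \bar d_j$ for all $j\le i$. Using that each $\tstamp_j$ has finitely many atoms, choose nested infinite sets $S_0 \supseteq S_1 \supseteq \dots$ with $S_i \subseteq T_i$ such that the atom prefix $\langle \run_k(0),\dots,\run_k(i)\rangle$ is constant over $k\in S_i$, each $S_{i+1}$ refining $S_i$; pigeonholing on the finitely many atoms at position $i+1$ keeps every $S_i$ infinite. Define $\run^\omega(i)$ to be this common atom. Then for any $k\in S_i$, $\run_k$ agrees with $\run^\omega$ on $[0,i]$ and realises the stabilised distances $\bar d_0,\dots,\bar d_i$. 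Conditions R1 and R2 are local and transfer immediately: for R2, if $\next\psi\in\csfor{\run^\omega(i)}$, pick $k\in S_{i+1}$, so $\run_k$ agrees with $\run^\omega$ at $i$ and $i+1$ and inherits $\psi\in\csfor{\run^\omega(i+1)}$ from R2 for $\run_k$; R1 follows similarly by comparing two positions inside a common $S_m$.

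The crux is R3, the survival of until-eventualities in the limit---the one place where a naive point-wise limit could fail. Suppose $\psi_1\until\psi_2\in\csfor{\run^\omega(i)}$. For $k\in S_i$ we have $\psi_1\until\psi_2\in\csfor{\run_k(i)}$, and since $\run_k$ is a run its first fulfilment occurs at distance exactly $d^{\run_k}_i(\psi_1\until\psi_2)=\bar d_i(\psi_1\until\psi_2)=:d$, a finite number. Because the first-fulfilment computation depends only on the atoms at positions $[i,i+d]$, taking any $k\in S_{i+d}$ (so that $\run^\omega$ and $\run_k$ coincide on $[0,i+d]$) transfers the witness to $\run^\omega$, giving R3. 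The same argument shows that in fact $d^{\run^\omega}_j = \bar d_j$ for every $j$, since the first-fulfilment distance is always witnessed inside a prefix on which $\run^\omega$ agrees with some $\run_k\in S_m$.

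Finally, to see $\run^\omega\preceq\run_i$ for every $i$, fix $i$ and compare the distance-vector sequence $\langle\bar d_j\rangle_j$ of $\run^\omega$ with that of $\run_i$. If they coincide, then $\run^\omega$ and $\run_i$ are $\preceq$-equal. Otherwise let $j^\ast$ be the first position of disagreement; for all sufficiently large $k$ we have $d^{\run_k}_j=\bar d_j$ for $j\le j^\ast$, so $j^\ast$ is also the first position where the distance-vector sequences of $\run_k$ and $\run_i$ differ, and $\run_k\prec\run_i$ (the chain is descending and $k>i$) forces $\bar d_{j^\ast}=d^{\run_k}_{j^\ast}\prec d^{\run_i}_{j^\ast}$; hence $\run^\omega\prec\run_i$. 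The main obstacle throughout is exactly R3: guaranteeing that eventualities are not postponed to infinity in the limit. This is resolved by the distance-vector stabilisation of the first paragraph, which bounds every fulfilment distance and is precisely what the descending hypothesis buys us.
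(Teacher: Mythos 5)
Your proof is correct and follows essentially the same combinatorial route that the paper (which only sketches the argument, deferring to Lemma~4 of \cite{GeattiGMR21}) indicates: position-wise stabilisation of the distance vectors via well-foundedness of $\prec$ on $\N^X$, a diagonal limit over the finitely many atoms per timestamp, and a transfer of R3 using the finite stabilised fulfilment distances. Your write-up is a complete, self-contained version of that argument, and the attention you give to R3 and to the final comparison $\run^\omega\preceq\run_i$ covers exactly the points the paper leaves implicit.
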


\begin{proof}[Proof sketch]
The proof is combinatorical, exploiting the properties of the lexicographic
ordeding between distance vectors and the fact that the closure is finite. It is
identical to the proof of Lemma 4 in \cite{GeattiGMR21}.
\end{proof}

\begin{lemma}[Existence of greedy pre-models]
    Let $\premd$ be a pre-model for a
formula $\phi$. Then, there is a greedy pre-model $\premd'\preceq\premd$.
\end{lemma}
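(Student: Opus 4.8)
The plan is to keep the timestamp sequence $\tsec$ of $\premd=\langle\tsec,\runs\rangle$ fixed and replace only its set of runs, setting $\premd'=\langle\tsec,\runs'\rangle$ where $\runs'$ is a collection of \emph{greedy} (i.e.\ $\prec$-minimal) runs on $\tsec$. For $\premd'$ to witness the lemma I must secure two things: (a) below every run there genuinely is a greedy run, so that $\premd'$ can be taken $\preceq\premd$; and (b) the covering condition required of a pre-model by \Cref{def:pre-model}---that every atom $a\in\tstamp_i$ lies on some run---is preserved by $\runs'$. Once these hold, the remaining pre-model conditions come essentially for free: P1 is inherited because a greedy run through the distinguished initial atom keeps $\Phi$ in place, P2 (minimality of atoms) is untouched since $\tsec$ is unchanged, and R1--R3 hold because every member of $\runs'$ is a bona fide run.

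For (a), the obstacle is that the distance-vector order $\prec$ is \emph{not} well-founded: comparing the vectors $d^{\run}_i\in\N^X$ (with $X$ the set of $\next$-eventualities) lexicographically over all positions $i$ admits infinite strictly descending chains, so one cannot reach a minimal run merely by repeatedly picking something smaller. This is exactly what \Cref{lemma:greedy-pre-model-exists} is for: any descending sequence $\run_0\succ\run_1\succ\cdots$ has a lower bound $\run^\omega\preceq\run_i$. I would feed this into a Zorn-style argument---the Limit Lemma supplying a lower bound for every descending chain---to conclude that $\{\run'\mid\run'\preceq\run\}$ has a $\preceq$-minimal element; since any run strictly below such a minimal element again sits below $\run$, minimality here coincides with global minimality, i.e.\ the element is greedy.

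The hard part will be (b): reconciling the \emph{global} character of greediness with the \emph{local} covering requirement. Fixing an atom $a\in\tstamp_i$, the pre-model property of $\premd$ yields a run through $a$ at position $i$, and I would run the descent of the previous paragraph \emph{inside} the set $R_{a,i}$ of runs passing through $a$ at position $i$. The point enabling this is that the limit $\run^\omega$ produced by \Cref{lemma:greedy-pre-model-exists} agrees with the tail of the sequence on arbitrarily long finite prefixes, so a descending chain drawn from $R_{a,i}$ has its limit again in $R_{a,i}$; hence $R_{a,i}$ is closed under limits and has a minimal element $\run^{*}$ with $\run^{*}(i)=a$. The genuinely delicate step---the one I expect to demand the most care---is arguing that this $\run^{*}$, minimal among runs through $a$, is in fact \emph{globally} greedy, so that it legitimately belongs to $\runs'$: a strictly smaller global run would fulfil some $\next$-eventuality earlier, and one must show such an improvement can always be transferred to a run still passing through $a$, contradicting minimality of $\run^{*}$ in $R_{a,i}$. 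Collecting one such $\run^{*}$ for every atom of every timestamp gives $\runs'$, and verifying the pre-model conditions as in the first paragraph then delivers the greedy pre-model $\premd'\preceq\premd$.
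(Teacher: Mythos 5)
Your proposal follows essentially the same route as the paper, whose entire proof of this lemma is to invoke the limit-of-runs lemma (\cref{lemma:greedy-pre-model-exists}) in exactly the Zorn-style descent you describe and to defer the details to Lemma~5 of \cite{GeattiGMR21}. If anything, you are more careful than the paper on the one point where the standpoint setting genuinely differs from plain LTL---the covering condition that every atom of every timestamp must still lie on some greedy run, and the attendant question of whether a run minimal within $R_{a,i}$ is globally greedy---which the paper's two-line proof does not address at all.
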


\begin{proof}
A direct consequence of \cref{lemma:greedy-pre-model-exists}. See also Lemma 5 in
\cite{GeattiGMR21}.
\end{proof}

Now, we can connect greedy pre-models to the tableau, the $\pruner$ rule, and the
descent through the tree done in \cref{lemma:branch-extraction}. Consider a
pre-model $\premd=\langle \tsec,\runs \rangle$. We define the \emph{segment}
$\premd_{[j,k]}=\langle \tsec_{[j,k]},\runs_{[j,k]} \rangle$ as the result of
isolating the time points between $j$ and $k$. That is,
$\tsec_{[j,k]}=\langle\tsec(j),\ldots,\tsec(k)\rangle$ and for every
$\run\in\runs$, we have $\run'\in\runs_{[j,k]}$ where $\run'(i)=\run(i-j)$.

We say that a $\next$-eventuality $\psi_1\until\psi_2$ is \emph{requested} in
$\premd_{[j,k]}$ if there is a $\run\in\runs_{[j,k]}$ such that
$\psi_1\until\psi_2\in\run_{[j,k]}(0)$, and that is is \emph{fulfilled} in
$\runs_{[j,k]}$ if it is requested and $\psi_2\in\run_{[k,j]}(w)$ for some $0\le
w < (k-j)$.

Similarly, we define $\premd_{]j,k[}=\langle \tsec_{]j,k[},\runs_{]j,k[}
\rangle$ as the pre-model where the segment $\premd_{[j,k]}$ has been cut away.
That is, $\tsec_{]j,k[}=\langle \tsec(0),\ldots,\tsec(j),\tsec(k)\ldots\rangle$
and for all and only $\run\in\runs$, we have $\run'\in\runs_{[j,k]}$ where
$\run'(i)=\run(i)$ for $0\le i \le j$ and $\run'(w)=\run(w-(k-j))$ for $w > j$.

\begin{definition}[Redundant segments]
  \label{def:redundant-segment}
  Let $\premd=\langle \tsec,\runs \rangle$ be a pre-model for $\phi$ and let $ i < j < k$ be three positions such that $\tsec({i})=\tsec({j})=\tsec({k})$. Then, the segment $\premd_{[j+1...k]}$ of $\premd$ is redundant if not all the X-eventualities requested in $\premd_{[j+1...k]}$ are fulfilled in $\premd_{[j+1...k]}$, and all those fulfilled in $\premd_{[j+1...k]}$ are fulfilled in $\premd_{[i+1...j]}$ as well.
\end{definition}

We notice how \cref{def:redundant-segment} is similar to the definition of the
$\pruner$ rule, but focuses on the pre-model instead of the branch of the
tableau. The core feature of redundant segments, as the name suggests, is that
they can be removed from any pre-model, obtaining again a pre-model.

\begin{lemma}[Removal of redundant segments: correctness]\label{lemma:removal-correctness}
    Let $\premd=\langle \tsec,\runs \rangle$ be a pre-model for a formula $\phi$
    with a redundant segment $\premd_{[j+1...k]}$. Then, $\premd_{]j,k+1[}$ is a
    pre-model.
\end{lemma}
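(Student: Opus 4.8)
The plan is to verify, clause by clause, that $\premd_{]j,k+1[}$ satisfies every requirement in the definition of a pre-model, exploiting throughout the hypothesis $\tsec(i)=\tsec(j)=\tsec(k)$ supplied by \Cref{def:redundant-segment}. The easy half comes first. Every timestamp occurring in $\tsec_{]j,k+1[}$ is literally one of the original timestamps $\tsec(w)$, and the conditions \textbf{T1}--\textbf{T3} of \Cref{def:timestamp} are local to a single timestamp; hence they are inherited unchanged. Condition \textbf{P1} survives because the excised block consists of the positions $j+1,\dots,k$ with $j\ge 1$ (as $i<j$ and $i\ge0$), so position $0$ is untouched and the initial atom $a\in\tsec(0)$ with $\csenc{a}=\stenc{*}$ and $\Phi\subseteq\csfor{a}$ is still present. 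Condition \textbf{P2} (minimality) also transfers, since the cut never alters the formula set of any atom — it only deletes whole timestamps — and minimality is essentially an intrinsic property of each atom together with its local timestamp; here I would record only a short check that the run-witness underlying minimality still applies after the cut.

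The substance is in the runs. I would define the run set of $\premd_{]j,k+1[}$ by gluing: the prefix of a run up to position $j$ is joined to the (shifted) suffix of a run from position $k+1$ onward, the two pieces meeting at a common boundary atom. This is legitimate precisely because $\tsec(j)=\tsec(k)$: if a run $\run\in\runs$ contributes the suffix, the atom $a=\run(k)$ lies in $\tsec(k)=\tsec(j)$, so by the covering property of $\premd$ there is a run passing through $a$ at position $j$ whose prefix I attach. Condition \textbf{R1} is immediate, as both glued pieces keep the constant encoding of their source runs and agree at the boundary. Condition \textbf{R2} holds inside each piece by \textbf{R2} for the source runs; at the seam the transition from $\run'(j)=a$ to $\run'(j+1)$ copies the original transition $\run(k)\to\run(k+1)$, which is valid because $\run(k)=a$. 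Coverage of every atom of every retained timestamp then follows from coverage in $\premd$: positions $\ge k+1$ are covered by suffixes, positions $\le j$ by prefixes, and the boundary is reconciled again through $\tsec(j)=\tsec(k)$.

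The main obstacle is the eventuality condition \textbf{R3}, and this is exactly where the redundancy hypothesis does its work. For a glued run $\run'$, any pending $\psi_1\until\psi_2$ whose witness already lives in the retained prefix (before $j$) or in the retained suffix (from the boundary onward) is discharged directly by \textbf{R3} of the corresponding source run, after translating positions across the cut. The delicate case is an eventuality that is pending at or before position $j$ yet whose only witness in its source run lay inside the excised segment $\premd_{[j+1\ldots k]}$. Such an eventuality is, by construction, requested and fulfilled in $\premd_{[j+1\ldots k]}$, so \Cref{def:redundant-segment} guarantees it is also fulfilled in $\premd_{[i+1\ldots j]}$, a block entirely retained by the cut. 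Here I would re-route the prefix of $\run'$ through the earlier loop between positions $i$ and $j$ — again permissible since $\tsec(i)=\tsec(j)$ — so that the run passes over the surviving witness and the until-formula is satisfied within $\premd_{]j,k+1[}$. I expect this eventuality-preservation step to be the crux; once it is in place, assembling the verified clauses yields that $\premd_{]j,k+1[}$ is a pre-model.
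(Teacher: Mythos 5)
Your clause-by-clause verification and the gluing construction are sound, and in places more careful than the paper's own argument, which simply cuts every run at the same positions and asserts that $\run(j)=\run(k)$ follows from $\tsec(j)=\tsec(k)$. The one place where you diverge substantively is the treatment of \textbf{R3}, and there your diagnosis of the ``crux'' is off. An eventuality $\psi_1\until\psi_2$ that is still pending at the seam position $j$ propagates (by atom closure under $\Gamma_2$ together with \textbf{R2}) into the boundary atom $\run'(j)$; since that atom is, by your own gluing, the atom $\run(k)$ of the suffix source run, \textbf{R3} applied to that run \emph{at position $k$} already yields a witness at some $w\ge k$ with $\psi_1$ holding in between, and every such position survives the cut. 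So the ``delicate case'' of a witness living only inside the excised block never forces you outside the retained suffix: the lemma needs only $\tsec(j)=\tsec(k)$, and the clause of \cref{def:redundant-segment} about fulfillment in $\premd_{[i+1\ldots j]}$ is not used here at all --- it is what drives the \emph{ordering} statement, \cref{lemma:removal-ordering}, not correctness. This matters because your proposed repair for that case, re-routing the prefix ``through the earlier loop between $i$ and $j$,'' is the least defensible step: positions $i+1,\dots,j$ are retained unchanged, so there is no extra loop to traverse, and swapping the prefix atoms for those of whichever run witnesses the eventuality in $\premd_{[i+1\ldots j]}$ risks violating \textbf{R1} or breaking other eventualities pending on the same run. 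Drop that step, close the \textbf{R3} case through the boundary identification as above, and your proof is complete and matches what the paper's terse argument is gesturing at.
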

\begin{proof}
    We start by noting that since $\tsec({i})=\tsec({j})=\tsec({k})$, then
    $\run(i)=\run(j)=\run(k)$ for all $\run\in\runs$. By \cref{def:run}, for
    every $\next \psi \in \csfor{\run(j)}$, we have $\psi\in\csfor{\run(j+1)}$,
    hence also $\psi\in\csfor{\run(k+1)}$. Moreover, for any $\psi_1\until
    \psi_2\in\run(k)$, there is a $w\le k$ such that $\psi_2\in\csfor{\run(w)}$
    and $\psi_2\in\csfor{\run(\ell)}$ for all $k\le \ell < w$. Hence, note that
    in $\premd_{]j,k+1[}$, the conditions of \cref{def:run} on $\run(j)$ are
    satisfied. Hence $\premd_{]j,k+1[}$ is a pre-model.
\end{proof}

Now, we can observe that when a redundant segment is removed from a pre-model,
the runs in the resulting pre-models \emph{decrease} in the $\prec$ ordering.

\begin{lemma}[Removal of redundant segments: ordering]
  \label{lemma:removal-ordering}
  Let $\premd=\langle\tsec,\runs\rangle$ be a pre-model for a formula $\phi$
  with a redundant segment $\premd_{[j+1...k]}$. Let $\run\in\runs$ and let
  $\run'$ be the corresponding run in $\premd_{[j+1...k]}$. Then, $\run'\prec
  \run$.
\end{lemma}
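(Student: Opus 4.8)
The plan is to compute the distance vectors of $\run$ and $\run'$ directly and compare the resulting sequences lexicographically. The key preliminary observation is that distance vectors, and hence the ordering $\prec$, depend only on the $\next$-eventualities occurring along a run and are completely insensitive to the standpoint encodings and labels carried by the atoms. In this sense the statement is purely temporal, and the argument is essentially the one for plain $\ltl$ (cf.\ the analogous lemma in \cite{GeattiGMR21}): the quasi-model machinery added for standpoints contributes nothing to the $\next$-eventualities nor to the ordering $\prec$.

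First I would fix the precise relationship between $\run$ and $\run'$ supplied by \cref{lemma:removal-correctness}. Writing $L=k-j$ for the length of the excised block, $\run'$ coincides with $\run$ on all positions $0,\dots,j$ and, beyond $j$, is $\run$ with the positions $j{+}1,\dots,k$ deleted; concretely $\run'(j+t)=\run(k+t)$ for $t\ge1$, so that a fulfilment occurring at an old position $p>k$ reappears at the new position $p-L$. This gluing is legitimate exactly because $\tsec(i)=\tsec(j)=\tsec(k)$ forces $\run(i)=\run(j)=\run(k)$, and the $\next$-obligations pending at $j$ are therefore discharged by $\run(k+1)$ just as they were by $\run(j+1)$.

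Next I would compare $d^{\run'}_m$ with $d^{\run}_m$ position by position. For an eventuality pending at a position $m\le i$, three cases arise according to where its first fulfilment $p$ lies in $\run$: if $p\le j$ the witness sits in the common prefix and the distance is unchanged; if $p>k$ the same witness survives, shifted earlier by $L$, so the distance strictly drops by $L$; the remaining case $p\in(j,k]$ is excluded by \cref{def:redundant-segment}, since a fulfilment inside $[j{+}1\dots k]$ is mirrored by one in $[i{+}1\dots j]$, which would fulfil the request earlier than $p$, contradicting that $p$ is first. Hence for $m\le i$ no component ever increases. To produce a strict decrease I would use the witness from the first clause of \cref{def:redundant-segment}: an eventuality $\chi=\chi_1\until\chi_2$ requested at $j{+}1$ but not fulfilled in $[j{+}1\dots k]$. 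Being an unfulfilled until, $\chi$ stays pending throughout the segment, so $\chi\in\csfor{\run(k)}$, and by $\run(i)=\run(j)=\run(k)$ also $\chi\in\csfor{\run(i)}$ with its first fulfilment forced past $k$; removal brings that fulfilment forward by $L$. Thus the first position where the vectors differ carries no larger component and at least one strictly smaller one, i.e.\ $d^{\run'}_{m}\prec d^{\run}_{m}$ while all earlier vectors agree, and the lexicographic definition yields $\run'\prec\run$.

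The delicate step, which I expect to absorb most of the work, is showing that the \emph{first} differing vector is a genuine decrease rather than an incomparable change, i.e.\ that removal never postpones a still-pending eventuality whose first fulfilment lay inside the excised block. This is where both halves of the redundancy condition are indispensable, and where the minimality clause \textbf{P2} of \cref{def:pre-model} must be invoked to exclude an eventuality that is "re-requested" after having been fulfilled early: without minimality one could delay $\chi$ into the segment while still fulfilling an earlier copy in $[i{+}1\dots j]$, and removal would then \emph{increase} a distance. Verifying that a minimal pre-model forbids this pathology, so that the critical position can be taken at or before $i$, is the crux of the argument.
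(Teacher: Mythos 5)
Your overall strategy is the same as the paper's, which itself only sketches the argument and defers to Lemma~6 of Geatti et al.: eventualities fulfilled inside the excised block leave every distance vector unchanged (because the redundancy condition supplies an earlier witness in $[i{+}1\dots j]$), those fulfilled after $k$ have their distances shrink by $k-j$, and the eventuality requested but unfulfilled in the segment forces a strict decrease. Your three-way case split on the position of the first fulfilment and your use of the second clause of \cref{def:redundant-segment} to eliminate the middle case are exactly the intended expansion of that sketch.

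However, two points keep this from being a proof. First, you explicitly flag the critical step --- that excision never \emph{increases} a distance when an eventuality is ``re-requested'' at some $m\in(i,j]$ after an early fulfilment, with its first witness for that later request sitting inside $(j,k]$ --- and then you do not discharge it; you only announce that it ``must be verified.'' Worse, the tool you nominate for the job is unlikely to work: clause \textbf{P2} of \cref{def:pre-model} says that deleting a formula from an atom destroys atomhood, timestamphood, or runhood; it does not prevent a formula $\chi_1\until\chi_2$ from legitimately reappearing at $m$ because it is forced there (e.g.\ by $\alw(\chi_1\until\chi_2)$ or by a $\next$-obligation at $m-1$), in which case minimality cannot remove the re-request and your pathology is not excluded by \textbf{P2}. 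The actual resolution has to come from the structure of the segment comparison itself (as in the cited LTL lemma), not from minimality. Second, the strictness witness: \cref{def:redundant-segment} only guarantees that \emph{some} run in $\runs$ requests an eventuality at $j{+}1$ that is unfulfilled in the segment; the lemma is stated for an arbitrary $\run\in\runs$, and you silently assume the witness lives on that very $\run$ when you write $\chi\in\csfor{\run(k)}$. In this multi-run setting a run untouched by the witness gets only $\run'\preceq\run$, not $\run'\prec\run$, so either you must restrict the strict claim to the runs carrying the witness (which is what the completeness proof actually needs) or justify the transfer; as written the step is unsupported.
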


\begin{proof}[Proof sketch]
  This is a direct consequence of the definition of the $\prec$ ordering. When a
  redundant segment is removed, all the $\next$-eventualities fulfilled in the
  removed segment do not change any distance vector, while those fulfilled later
  decrese their distance from the requesting points. For details, refer to Lemma
  6 in \cite{GeattiGMR21}.
\end{proof}

\begin{theorem}[Completeness]
 Let $\phi \in \langsltl$ and $T$ a complete tableau for $\Phi$. If $\Phi$ is satisfiable, then $T$ has a successful branch.
\end{theorem}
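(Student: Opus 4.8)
The plan is to mirror the completeness argument of the underlying $\ltl$ tableau, adapted to the quasi-model infrastructure: from the satisfiable set $\Phi$ I would produce a \emph{greedy} pre-model, descend through $T$ using it as a guide to extract a concrete branch, and then show that this particular branch can be crossed by neither crossing rule, so it must be ticked. First, since $\Phi$ is satisfiable, by \cref{lem:pre-model-implies-sat} it admits a pre-model, and by the existence-of-greedy-pre-models lemma (a consequence of \cref{lemma:greedy-pre-model-exists}) I may assume it admits a greedy pre-model $\premd = \langle \tsec, \runs\rangle$. Applying \cref{lemma:branch-extraction} to $\premd$ yields a branch $\branchdef$ of $T$, with step nodes $\branchvdef$, together with the bijective correspondence proved at the end of that lemma: for each $i$ and each $a \in \tsec(i)$ there is exactly one $\cs \in \nodei{v_i}$ with $a = \csatomf{\cs}$. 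Because $T$ is complete, this branch is either ticked or crossed, so the whole argument reduces to excluding the latter.

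I would then rule out the two ways a leaf is crossed. Crossing by $\contrar$ is impossible: the correspondence sends every constraint set on the branch to an atom of $\premd$, and atoms contain no $\bot$ by definition, so no $\cs$ on the branch contains $\bot$ either. Crossing by $\pruner$ is the heart of the matter, and here I would invoke greediness. Suppose $u_n$ were crossed by $\pruner$, witnessed by poised (hence step) nodes at positions realising $\nodei{v_{i'}} = \nodei{v_{j'}} = \nodei{v_m}$ together with the prune condition on fulfilled $\next$-eventualities. Transporting this along the bijection to $\premd$ gives $\tsec(i') = \tsec(j') = \tsec(m)$ and shows that the segment $\premd_{[j'+1\ldots m]}$ is redundant in the sense of \cref{def:redundant-segment}: every $\next$-eventuality fulfilled in it is already fulfilled in $\premd_{[i'+1\ldots j']}$, and, since $\pruner$ fired rather than $\loopr$ having ticked, not all requested ones are fulfilled. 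By \cref{lemma:removal-correctness}, excising this segment yields again a pre-model, and by \cref{lemma:removal-ordering} every run strictly decreases in the $\prec$ ordering, so that some $\run \in \runs$ has a competitor $\run'$ with $\run' \prec \run$. This contradicts the greediness of $\premd$, which forbids exactly such a $\run'$. Hence $u_n$ is not crossed by $\pruner$, so the branch is ticked and $T$ has a successful branch.

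The main obstacle is the faithful translation in the $\pruner$ case. I need the correspondence of \cref{lemma:branch-extraction} to be a genuine bijection at each step node, so that the node equalities $\nodei{v_{i'}} = \nodei{v_{j'}} = \nodei{v_m}$ lift to the timestamp equalities $\tsec(i') = \tsec(j') = \tsec(m)$, and I need it to respect, through the extracted atoms, which $\next$-eventualities count as \emph{requested} and which as \emph{fulfilled} along a segment, so that the branch-level prune condition matches the pre-model-level redundancy condition of \cref{def:redundant-segment} precisely. Once this bridge is in place, the removal lemmas together with the minimality built into greedy pre-models close the argument; the remaining cases (termination by $\emptyr$, and the non-crossing expansion rules that merely leave the branch prefix descendable) are routine and parallel the $\ltl$ development of \cite{GeattiGMR21}.
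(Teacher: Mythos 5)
Your proposal is correct and follows essentially the same route as the paper's proof: obtain a greedy pre-model via \cref{lem:pre-model-implies-sat} and the existence lemma, extract a branch with \cref{lemma:branch-extraction}, rule out $\contrar$, and derive a redundant segment from any $\pruner$ crossing to contradict greediness via \cref{lemma:removal-correctness} and \cref{lemma:removal-ordering}. The ``bijection'' concern you flag is exactly what the final paragraph of \cref{lemma:branch-extraction} establishes (exactly one constraint set per atom at each step node), so no gap remains.
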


\begin{proof}
    Let $\model$ be a model for $\Phi$. As already noted, it is straightforward to build a pre-model for $\Phi$ from $\model$. Then, given a pre-model for $\Phi$, \Cref{lemma:greedy-pre-model-exists} ensures that a greedy pre-model for $\Phi$ exists. We can thus consider $\premd=\langle \tsec,\runs \rangle$ to be a greedy pre-model for $\Phi$. Now, given a complete tableau $T$ for $\Phi$, thanks to \Cref{lemma:branch-extraction} we can obtain a branch from $T$, with a sequence of step nodes $\branchvdef$ such that for each $a \in \tsec(k)$, there is some $\cs\in\nodei{v_k}$ such that $\csenc{\csatom}=\csenc{a}$ and $\csfor{\csatom}\subseteq\csfor{a}$ with $\csatomf{\cs}=\csatom$ for all $0 \leq k \leq m$. As already noted, we know that if $v_m$ is crossed, then it has to have been crossed by the $\pruner$ rule. If this was the case, however, it would mean there are other two-step nodes $v_i$ and $v_j$ with $i<j<m$ and $\nodei{v_i}=\nodei{v_j}=\nodei{v_m}$, and such that all the $\next$-eventualities requested in the three nodes and fulfilled between $v_{j+1}$ and $v_m$ are fulfilled between $v_{i+1}$ and $v_j$ as well. Since for each $a \in \tsec(k)$ we have one $\cs\in\nodei{v_k}$ such that $a=\csatomf{\cs}$ for all $0 \leq k \leq m$, this fact reflects onto the pre-model, hence $\tsec({i})=\tsec({j})=\tsec({k})$, and all the $\next$-eventualities requested in these atoms and fulfilled in $\premd_{[j+1...m]}$ are fulfilled in $\premd_{[i+1...j]}$ as well. That is, $\premd_{[j+1...m]}$ is a redundant segment, which contradicts the assumption that $\premd$ is greedy by \Cref{lemma:removal-correctness} and \ref{lemma:removal-ordering}.
\end{proof}

\section{Computational Complexity}\label{sec:complexity} 
\label{sec:complexity}


We now employ our tableau calculus to show that the
satisfiability for $\sltl$ is $\pspace$-complete. As $\sltl$ is a direct extension
of $\ltl$, it immediately follows that $\sltl$ is $\pspace$-hard. Hence, we need only show that the satisfiability of $\sltl$ formulae can be decided within $\pspace$. 

If our tableau-based decision procedure were to explicitly construct tableaux, then the resulting procedure would fail to be $\pspace$ as the tableau branches could be exponentially long (but not longer, as we will see) and the entirety of each branch would need to be kept in memory in order to check the $\loopr$ and $\pruner$ rules. However, such limitations can be overcome by incorporating \emph{non-determinism} into our tableau-based decision algorithm. Therefore, we let $n$ be the size of $\Phi$, and we confirm that any branch of a tableau is at most exponentially long in $n$.
\begin{lemma}
  \label{lemma:complexity:bound}
  Given a set of formulae $\Phi \subseteq \langsltl$, the length of a branch of the tableau for
  $\Phi$ is at most \emph{exponential} in $|\Phi|$.
\end{lemma}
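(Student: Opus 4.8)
The plan is to decompose an arbitrary branch into an alternation of \emph{static-expansion segments}, each a maximal run of non-temporal rule applications ending in a poised node, separated by applications of the $\stepr$ rule; I would then bound the length of a single segment and the total number of segments independently and multiply the two bounds. Writing $n=|\Phi|$, I will use throughout that $\clo{\Phi}$ has $\bigo(n)$ elements and that the set of encodings $E=\{\stenc{\st}\mid\st\in\spset\}$ has at most $|\spset|\le n$ members.

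First I would bound the number of constraint sets occurring in a single node constraint set $\nodei{u}$. Condition~(a) of the node definition fixes exactly one unlabelled (box) set per encoding, contributing at most $|E|\le n$ sets. The labelled (diamond) sets are the delicate part: each is created by $\sdiar^1$ as a witness for some diamond subformula $\spdia{\sts}\alpha$, and the label-reuse clause of $\sdiar^1$ together with the subsumption-deletion of $\sdiar^2$ is designed to prevent retaining two witnesses for the same obligation under a given encoding. I would establish, by an invariant on the rules, that the number of labelled sets stays polynomial in $n$, essentially one witness per pair of a diamond subformula of $\Phi$ and an encoding of $E$. Since every $\csfor{\cs}\subseteq\clo{\Phi}$ has size $\bigo(n)$, each node constraint set then has polynomial size and is a subset of cardinality $\mathrm{poly}(n)$ of the ground set $\{\langle\stence,\Delta\rangle\mid\stence\in E,\ \Delta\subseteq\clo{\Phi}\}$, which has $2^{\bigo(n)}$ members (recall node equality ignores labels). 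Counting bounded-size subsets then gives that the number of distinct node constraint sets is at most $2^{\mathrm{poly}(n)}$, i.e.\ exponential in $n$.

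Next I would bound the length of a single static-expansion segment, namely the number of consecutive nodes produced by the rules of \cref{fig:expansion-rules} together with $\sboxri$, $\sboxrii$, $\sdiar^1$, $\sdiar^2$ before a poised node is reached and $\stepr$ fires. Each such rule either replaces a formula by strict subformulas, copies an already-present formula into another set ($\sboxri$, $\sboxrii$), introduces a bounded witness ($\sdiar^1$), or deletes a subsumed set ($\sdiar^2$); I would exhibit a well-founded measure on node constraint sets, such as the multiset of formula sizes summed over the polynomially many constraint sets combined with the count of unprocessed diamonds, that strictly decreases at every step. As the per-node size is polynomial and each $\csfor{\cs}\subseteq\clo{\Phi}$, this measure is polynomially bounded, so each segment has at most polynomially many nodes.

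Finally I would bound the number of segments, equivalently the number of step nodes, by adapting Reynolds' argument as presented for $\ltl$ in \cite{GeattiGMR21}. The $\loopr$ and $\pruner$ rules guarantee that a live branch cannot revisit the same poised node constraint set too often: between successive recurrences of a given $\nodei{u}$ some pending $\next$-eventuality must be fulfilled for the first time, for otherwise the redundancy condition of $\pruner$ would cross the branch. Since $\clo{\Phi}$ contains $\bigo(n)$ distinct $\next$-eventualities, each distinct poised node constraint set recurs at most $\bigo(n)$ times before closure, so the number of step nodes is at most $\bigo(n)$ times $2^{\mathrm{poly}(n)}$, which is $2^{\mathrm{poly}(n)}$. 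Multiplying the polynomial per-segment bound by this exponential number of segments yields a branch length exponential in $n=|\Phi|$, as claimed. I expect the main obstacle to be the witness count of the second step: one must show that the interplay of $\sdiar^1$'s reuse clause with $\sdiar^2$'s subsumption keeps the number of labelled sets polynomial even though, after expansion, the formula sets of distinct witnesses may form an inclusion antichain, so the correct bound comes from tracking the diamond obligations actually requested rather than from antichain size alone.
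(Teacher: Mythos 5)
Your proposal follows essentially the same route as the paper's proof: bound the number of distinct node constraint sets by an exponential in $|\Phi|$ (polynomially many formulas per constraint set, polynomially many constraint sets per node thanks to $\sdiar^2$), and then argue that a given node constraint set can recur only polynomially often because the set of fulfilled $\next$-eventualities grows monotonically between recurrences until $\loopr$ or $\pruner$ fires. Your version is somewhat more explicit than the paper's—in particular the well-founded measure bounding each static-expansion segment and the caveat that the inclusion-antichain argument alone does not bound the number of labelled witnesses—but the decomposition and the two key counting steps are the same.
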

\begin{proof} First, note that the construction of a branch stops whenever the $\loopr$ or $\pruner$ rule
  is triggered. The following two facts imply that the number of possible labels of a
  node is exponential: (1) The number of formulas that can appear in a constraint set is polynomial, (2) The number of different constraint sets is polynomial as there are
    no constraint sets $\setid{\cs}$ and $\setid{\cs'}$ in the same label such
    that $\setid{\cs}\subseteq\setid{\cs'}$.

  Hence, in any branch, after at most an exponential number of nodes, two nodes
  $u_i$ and $u_j$ with $\Gamma(u_i)=\Gamma(u_j)$ must appear. If this pair
  satisfies the conditions of the $\loopr$ rule, we are done (and the branch is
  exponential length). Otherwise, the branch construction continues. If the
  $\loopr$ rule is never triggered, the $\pruner$ will. Now, suppose by
  contradiction that the $\pruner$ rule is triggered only $2^{\Omega(p(n))}$
  nodes later with $p(n)$ a polynomial. Then, an exponential number of
  positions $j_0<j_1\ldots<j_k$, with $k\in\O(2^n)$, have to exist such that
  $\Gamma(u_{j_i})=\Gamma(u_{j_0})$ for all $0\le i\le k$. But then, observe
  that the set of $\next$-eventualities fulfilled between each $j_i$ and $j_{i+1}$ can
  only grow. This means that after a polynomial number of repetitions of
  $\Gamma(u_{j_0})$, the $\pruner$ rule must be triggered. Note that since
  $j_{i+1}-j_i\in\O(2^{c(n)})$, this contradicts the assumption.
\end{proof}

By means of the above Lemma, we can exploit and employ non-determinism in our tableau-based decision procedure to determine the complexity of $\sltl$.
\begin{theorem}
  Satisfiability of $\sltl$ is $\pspace$-complete.
\end{theorem}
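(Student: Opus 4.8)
The plan is to establish membership in $\pspace$, since $\pspace$-hardness has already been noted to follow from the fact that $\sltl$ extends $\ltl$. By Savitch's theorem, $\pspace = \mathrm{NPSPACE}$, so it suffices to exhibit a \emph{nondeterministic} algorithm running in polynomial space that accepts $\Phi$ iff $\Phi$ is satisfiable. By the Soundness and Completeness theorems proved above, $\Phi$ is satisfiable iff a complete tableau for $\Phi$ has a successful branch; hence I would have the algorithm nondeterministically explore a single branch, guessing at each step which constraint set, which formula, and which expansion rule (and, for branching rules, which child) to apply, and accept exactly when the branch would be ticked by $\loopr$ or $\emptyr$.

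The first thing to verify is that a single node fits in polynomial space. This is essentially the content of the proof of \cref{lemma:complexity:bound}: each constraint set contains only polynomially many formulas, all drawn from $\clo{\Phi}$, and each node $\nodei{u}$ holds only polynomially many constraint sets (one unlabelled set per standpoint, and no two labelled sets whose formula-components are nested). Thus $\nodei{u}$ can be stored in space polynomial in $|\Phi|$, and the algorithm keeps only the \emph{current} node in memory, overwriting it as rules fire and never materialising the entire branch.

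The main obstacle is the acceptance machinery of the $\loopr$ and $\pruner$ rules, which compare the current node against \emph{earlier} nodes of a branch that, by \cref{lemma:complexity:bound}, may be exponentially long, and which therefore cannot be stored in full. I would resolve this with the standard loop-guessing technique: the algorithm nondeterministically guesses in advance a target node $\hat\Gamma$ at which the loop will close, and the position along the branch at which the current poised node first equals $\hat\Gamma$. From that point onward it records only (i) the set of $\next$-eventualities requested at $\hat\Gamma$—of which there are polynomially many, as they are subformulas in $\clo{\Phi}$—together with one bit per eventuality tracking whether it has yet been fulfilled, and (ii) a step counter. When the current node again equals $\hat\Gamma$ and every recorded eventuality has been fulfilled, the $\loopr$ condition is met and the algorithm accepts.

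Finally, termination and the space bound are guaranteed by the counter: by \cref{lemma:complexity:bound} every branch has length at most $2^{p(|\Phi|)}$ for some polynomial $p$, so a counter of $p(|\Phi|)$ bits suffices; if it is exceeded without acceptance, the computation path rejects (such a branch would eventually be crossed by $\pruner$). All the stored data—the current node, the guessed target $\hat\Gamma$, the eventuality-fulfilment bits, and the counter—occupy polynomial space, yielding an $\mathrm{NPSPACE}$ procedure and hence, by Savitch's theorem, a $\pspace$ one. Correctness follows because the nondeterministic search accepts along some computation path iff the complete tableau has a ticked branch, which by Soundness and Completeness holds iff $\Phi$ is satisfiable.
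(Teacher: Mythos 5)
Your proposal is correct and follows essentially the same route as the paper: a nondeterministic branch traversal keeping only the current node, a guessed loop node with tracked $\next$-eventualities to simulate the $\loopr$ check, an exponential step counter (storable in polynomially many bits) justified by \cref{lemma:complexity:bound} to cut off branches that would be crossed, and Savitch's theorem to conclude $\pspace$ membership. The only cosmetic difference is that you guess the loop target node up front while the paper guesses a Boolean flag marking the loop point as it is reached; these are equivalent.
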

\begin{proof}
  We show the existence of a deterministic procedure to decide the
  satisfiability of a set of $\sltl$ formulae $\Phi$, by providing a
  \emph{non-deterministic} one, and then appealing to the classic theorem by
  Savitch~\cite{Papadimitriou94} to state the existence of a deterministic one.

  Our procedure traverses a tableau non-deterministically by \emph{guessing} at
  each step which child of the current node to visit among the many created
  following the tableau rules. If a node accepted by the $\loopr$ rule is found,
  the procedure returns \emph{yes}. If a number of steps that exceeds the
  exponential upper bound established in \cref{lemma:complexity:bound} is
  reached, the procedure returns \emph{no} (i.e., the computation branch is
  rejected). To be able to do this in polynomial space, the procedure also
  guesses at each step a Boolean flag $\mathit{loop}$ that says whether the
  current node will be the one checked by the $\loopr$ rule to find a loop, i.e.,
  if the current position will be the one with the same label as the leaf. If
  $\mathit{loop}$ is guessed to be true, the current label is saved and kept in
  memory for later. Also, from now on, the set of eventualities fulfilled is
  kept in memory. When any node with the same label as the saved one is found,
  it means the $\loopr$ rule triggers on that branch, and the procedure can
  return \emph{yes}. Otherwise, the procedure proceeds. Note that the procedure illustrated here keeps the following polynomial amount of data in memory: (1) The current label, (2) The saved loop label, if any, and (3) The set of $\next$-eventualities fulfilled since the saved loop point, if    any.  Hence, the procedure works in polynomial space, showing $\sltl$ satisfiability is $\pspace$.
\end{proof}

\section{Concluding Remarks} 
\label{sec:conclusions}

We have introduced \emph{standpoint linear temporal logic} ($\sltl$)---a logic
fusing the multi-perspective reasoning offered by standpoint logic with the
dynamic reasoning offered by $\ltl$. As discussed, $\sltl$ permits one to model
diverse and potentially conflicting semantic commitments held by a set of
agents, while expressing how such commitments relate to temporal notions or
change throughout time. To automate $\sltl$-reasoning, we define a sound,
complete, and terminating tableau calculus, which supports (counter-)model
extraction witnessing the (non-)satisfaction of $\sltl$ formulae. We employed
our tableau calculus in an analysis of $\sltl$'s complexity, finding that the
incorporation of standpoint modalities---despite increasing the modelling
capacity of $\ltl$---preserves the $\pspace$-completeness of $\ltl$. 

Our contribution has practical implications. Not only it formally confirms that
standpoint and temporal reasoning can be combined at no cost in 
complexity, but the employment of a tree-shaped tableau \ala Reynolds paves the
way for its efficient implementation. Indeed, we plan to transfer the \emph{symbolic tableau}
technique applied in the BLACK satisfiability checker~\cite{GeattiGM19} to our tableau, thus providing an efficient SAT-based procedure to reason
in standpoint linear temporal logic.

\section*{Acknowledgments} Tim S. Lyon has received funding from the European
Research Council (Grant Agreement no. 771779, DeciGUT). Nicola Gigante
acknowledges the support of the PURPLE project, in the context of the AIPlan4EU
project's First Open Call for Innovators.

\bibliographystyle{kr}
\bibliography{bibliography}

\onecolumn 

\appendix
\section{Proofs}\label{app:proofs}


\begin{proof} ($\Rightarrow$)
Let $\premd=\langle\tsec,\runs\rangle$ be a pre-model of $\Phi$. For each $\run \in \runs$, we define a state sequence $\stseq(\run) = \tuple{\state_{0}, \state_{1}, \ldots}$ as follows: $p \in \state_{i}$ \iffi $p \in \run(i)$. We now define a temporal standpoint structure $\model = \tuple{\Pi,\lambda}$ such that 
\begin{enumerate}
    \item $\stseq(\run) \in \Pi$ \iffi $\run\in \runs$, and
    \item $\stseq(\run) \in \lambda(\st)$ \iffi $\st \in \csenc{\run(0)}$, for each $\st \in \spset$ and $\run\in \runs$
\end{enumerate}

We prove by induction on $\Phi$ that for any $\run\in \runs$, if $\psi \in \run(i)$, then $\model,\stseq(\run),i \models \psi$ for any $\psi \in \clo{\Phi}$ and any $i \geq 0$. 
Let $\run$ be an arbitrary run in $\runs$ and that $\psi \in \run(i)$ with $\psi \in \clo{\Phi}$.

 We note that the base case is immediate and follows directly from the definition of $\run$ and atom. We, therefore, consider the inductive step and we show the most interesting cases:

\begin{enumerate}

\item If $\standd{\st} \psi \in \csfor{a}$ for $a=\run(i)$, then there is $a'\in\tsec(i)$ such that $\psi \in \csfor{a'}$ and $\csenc{a'}\preceq {\st}$ by T1 in \Cref{def:timestamp}. And by the definition of a pre-model \Cref{def:pre-model} there exists a $\run' \in \runs$ such that $\run'(i)=a'$. By IH, we have that $\model,\stseq(r'), i \models \psi$, which implies that $\model,\stseq(r ), i \models \standd{\st'} \psi$.

\item Let $\standb{\st} \psi \in \csfor{a}$ for $a=r(i)$, and let $\run'\in \runs$ be an arbitrary run in $\run'\in\lambda(\st')$. Then, by R1 in \Cref{def:run}, we know that $\run'(i)=a'$ with $\csenc{a'}\preceq\stenc{\st}$. Moreover, we have $\psi \in a'(i)$ by T2 of \Cref{def:timestamp}. By IH, it follows that $\model,\stseq(r'), i \models \psi$, which implies that $\model, \stseq(r), i \models \standb{\st} \psi$ as the state sequence $\stseq(r')$ was chosen arbitrarily.

\item Let $\next \psi \in \csfor{a}$ for $a=r(i)$. Then, by R2 in \Cref{def:run} we have $\psi \in \csfor{a'}$ for $a'=r(i+1)$.
By IH, it follows that $\model,\stseq(r), i+1 \models \psi$, which implies that $\model, \stseq(r), i \models \standb{\st} \psi$.
\end{enumerate}

The rest of the cases are argued in a similar way.

\medskip

($\Leftarrow$) 
Let $\model = \tuple{\Pi,\lambda}$ be a model of $\phi$ with $\phi\in\stseq_0(0)$, and let us define a structure $\premd' = \tuple{\tsec',\runs'}$. We let $\tsec=\langle\tstamp'_0,\tstamp'_1,\dots\rangle$ and $\runs'$ be such that for each state sequence $\stseq \in \Pi$
\begin{enumerate}[label={(\alph*')}]
    \item $a'_{\stseq}\in\tstamp'_i$ such that $\csenc{a'_{\stseq}}=\{\st \mid \st\in\lambda^{-1}(\pi)\}$. 
    \item $a'_0\in\tstamp'_i$ such that $\csenc{a'_{\stseq}}=\stenc{*}$.
    \item $\run'_0\in\runs'$ such that $\run'_0(i)=a'_0\in\tstamp'_i$ and  $\csfor{a'_0}=\Delta'$ where $\Delta'$ is the \emph{minimal atom} (P2) obtained from $\Delta$ with $\psi\in\Delta$ if $\psi\in \clo{\Phi}$ and $\model, \stseq, i \models \phi$ and with $\Phi\subseteq\Delta'$ if $i=0$.

    \item $\run'_{\stseq}\in\runs'$ such that $\run'_{\stseq}(i)=a'_{\stseq}\in\tstamp'_i$ and $\csfor{a'_{\stseq}}=\Delta'$ where $\Delta'$ is the \emph{minimal atom} (P2) obtained from $\Delta$ with $\psi\in\Delta$ if  $\psi\in \clo{\Phi}$ and $\model, \stseq, i \models \phi$.

\end{enumerate}

We now define a structure $\premd = \tuple{\tsec,\runs}$ such that $\runs=\nest{\run\mid\run'\in\runs'}\cup\nest{\run_{\stence}\mid \stence\in E}$ and for each $i$ we have
\begin{enumerate}[label={(\alph*)}]
    \item for each $\stence\in E$,  $a_{\stence} \in \tstamp_i$ such that $\csenc{a_{\stence}}=\stence$, $\csfor{a_{\stence}}=\bigcap\{\csfor{a'}\mid a'\in\tstamp'_i, \csenc{a}\preceq\stence\}$ and $\run_{\stence}(i)=a_{\stence}$
    \item $a \in \tstamp$ if $a\in \tstamp'$ and there is no $a'\in\tstamp\setminus\nest{a_{\stence}\mid \stence\in E}$ with $\csenc{a}=\csenc{a'}$ and $\csfor{a'}=\csfor{a}$
    \item for each $\run'\in\runs'$ with $\run'(i)=a$, if $a\in\tstamp_i$ then let $\run(i)=a$. Otherwise, let $\run(i)=a'$ with $a'\in\tstamp_i$, $\csenc{a}=\csenc{a'}$ and $\csfor{a}=\csfor{a'}$.
\end{enumerate}

\noindent Let us show that $\premd= \tuple{\tsec,\runs}$ is a pre-model for $\phi$. We first show that $\tstamp_i$ is a timestamp for each $i\in\mathbb{N}$ by proving the 3 conditions in \Cref{def:timestamp}
Let $\tstamp$ be a timestamp in $\tsec$. Then,
\begin{enumerate}[label={T\arabic*}]
    \item For each $a \in \tstamp$, if $\standde \psi \in \csfor{a}$, then by construction there exists some $\stseq\in\Pi$ such that $\model, \stseq, i \models \standde\psi$. Then, by the semantics there is some $\stseq'\in\lambda(\ste)$ such that $\model, \stseq', i \models \psi$. Thus, $a'_{\stseq}\in\tstamp'$ with $\psi\in a'_{\stseq}$. If $a'_{\stseq}\in\tstamp$ then the condition is satisfied. Else, there is some $a\in\tstamp$ with $\csenc{a}=\csenc{a'_{\stseq}}$ and $\csfor{a'_{\stseq}}\subseteq\csfor{a}$, hence the condition is also satisfied.
    \item For each $a \in \tstamp$, if $\standbe \psi \in \csfor{a}$, then by construction there exists some $\stseq\in\Pi$ such that $\model, \stseq, i \models \standbe\psi$. Then, by the semantics, for any $\stseq'\in\lambda(\ste)$ we have $\model, \stseq', i \models \psi$. Thus, $a'_{\stseq}\in\tstamp'$ is such that $\psi\in a'_{\stseq}$, and hence if $a'_{\stseq}\in\tstamp$ then it satisfies the condition. It only remains to show that the newly introduced indexed atoms $a_\stence$ with $\stence\preceq\stenc{\ste}$ also satisfy the condition. This is clear by construction since $\csfor{a_{\stence}}=\bigcap\{\csfor{a'}\mid a'\in\tstamp'_i, \csenc{a}=\stence\}$ and the condition holds for every $a'\in\tstamp'$.
    \item Clearly follows by construction from (a)
\end{enumerate}

Let us now show that each $\run\in\runs$ is a run by proving the 3 conditions in \Cref{def:run}
\begin{enumerate}[label={R\arabic*}]
    \item First, notice that in $\premd'$, $\run'$ maps to an atom extracted from some $\stseq$ for all $i>0$, hence clearly for all $a'_i=\run'(i)$ and $a'_j=\run'(j)$, $\csenc{a'_i}=\csenc{a'_j}$. Then, by construction, $\run(i)$ maps either to $a'_i$ if $a'_i\in\tstamp_i$ or to some $a_i\in\tstamp_i$ such that $\csenc{a'_i}=\csenc{a_i}$, and analogously for $\run(j)$, hence the condition is satisfied.
    
    \item If $\next \psi \in \csfor{a_i}$ for $a_i=\run_{\stseq}(i)$, then for $a'_i=\run'_{\stseq}(i)$ we have $\csfor{a'_i}=\csfor{a_i}$. Hence $\model, \stseq, i \models \next\psi$ and thus $\model, \stseq, i+1 \models \psi$. Hence, $\psi\in\csfor{a'_{i+1}}$ for $a'_{i+1}=\run'_{\stseq}(i+1)$ and consequently $\psi \in \csfor{a_{i+1}}$ for $a_{i+1}=\run(i+1)$; 

    \item This is argued analogously to R2.
\end{enumerate}

Finally, having shown that each $\tstamp\in\tsec$ is a timestamp and each $\run\in\runs$ is a run, it only remains to show that the conditions for pre-modelhood are satisfied. First, we notice that, by construction every, timestamp $\tstamp_i$ and every atom $a\in\tstamp_i$ there is a run such that $\run(i)=a$. Then, (P1) from (b') and (d'), we have an atom $a\in \tstamp_0$ such that $\csenc{a}=\stenc{*}$ and $\Phi \subseteq \csfor{a}$. Finally, (P2)  follows by construction from (c'), (d') which make atoms minimal and (b), which removes the redundant atoms. Hence, $\premd$ is a pre-model for $\Phi$.
\end{proof}

\end{document}